\newcommand{\E}{{\mathbb E}}
\DeclareMathOperator*{\argmin}{arg\,min}
\DeclareMathOperator{\R}{\mathbb{R}}
\newcommand{\cv}{{\mbox{CV}_{loo}}}
\newcommand{\cvr}{{\mbox{CV}_{ro}}}
\newcommand{\x}{\mathbf{x}}
\newcommand{\w}{\mathbf{w}}
\newcommand{\y}{\mathbf{y}}
\newcommand{\avec}{\mathbf{a}}
\newcommand{\bvec}{\mathbf{b}}
\newcommand{\dvec}{\mathbf{d}}
\newcommand{\evec}{\mathbf{e}}
\newcommand{\fvec}{\mathbf{f}}
\newcommand{\kvec}{\mathbf{k}}
\newcommand{\hvec}{\mathbf{h}}
\newcommand{\uvec}{\mathbf{u}}
\newcommand{\vvec}{\mathbf{v}}
\providecommand{\nor}[1]{\left\lVert {#1} \right\rVert}
\providecommand{\scal}[2]{\langle{#1},{#2}\rangle}
\newcommand{\hh}{{\cal H}}
\newcommand{\Xn}{{\mathbf X }}
\newcommand{\yn}{{\mathbf y }}
\newcommand{\Kn}{{\mathbf K }}
\newcommand{\Si}{S_i}
\newcommand{\PP}{{\mathbb P}}
\title{For interpolating kernel machines, minimizing the norm of the ERM solution minimizes stability}
\author[1,2]{Akshay Rangamani \thanks{\url{arangam@mit.edu}}}
\author[1]{Lorenzo Rosasco \thanks{\url{lrosasco@mit.edu}}}
\author[1,2]{Tomaso Poggio \thanks{\url{tp@csail.mit.edu}}}
\affil[1]{Center for Brains, Minds, and Machines, MIT}
\affil[2]{McGovern Institute for Brain Research, MIT}
\date{}
\begin{document}

	%TAKE OUT % below for memo
	%%%RTUXDDD
%	\includepdf{cover.pdf} % after \begin{document}
%	\setcounter{page}{1}

\maketitle
%\begin{center}
%\today
%\end{center}

\begin{abstract}

 % In this paper we derive bounds for $\cv$ stability of interpolating
%  kernel regressors and of separating kernel classifiers in the
%  modern regime (without regularization) in which the number of
%  parameters is larger than the number of data points. As shown in
%  \cite{Foundations} these stability bounds characterize the test
%  error for the classical regime as well as for the modern overparametrized regime
%  of interpolating regressors and separating classifiers. Past results
%  show that minimizing $\cv$ stability among the ERM solutions with
%  zero empirical error minimizes the expected error. In addition the
%  results of this paper imply that 1) the minimum norm solution has
%  the best $\cv$ stability and 2) the test error, under the assumption of $x$
%  data vectors with components that are independent, identically
%  distributed random variables with mean $0$ and variance $\sigma^2$,
% is expected to show a double descent curve.

\noindent  We study the average $\cv$ stability of kernel ridge-less regression
  and derive corresponding risk bounds.  We show that the
  interpolating solution with minimum norm minimizes a bound on $\cv$
  stability, which in turn is controlled by the condition number of
  the empirical kernel matrix.  The latter can be characterized in the
  asymptotic regime where both the dimension and cardinality of the
  data go to infinity.  Under the assumption of random kernel matrices,
  the corresponding test error should be expected to follow a double descent curve.

\end{abstract}
\vskip0.1in

\section{Introduction} \label{sec:intro}

Statistical learning theory studies the learning properties of machine
learning algorithms, and more fundamentally, the conditions under which
learning from finite data is possible. In this context, classical learning
theory focuses on the size of the hypothesis space in terms of
different complexity measures, such as combinatorial dimensions,
covering numbers and Rademacher/Gaussian complexities \citep{shai,bou}.
Another more recent approach is based on defining suitable notions of
stability with respect to perturbation of the data \citep{BE:2001,kn2002}.
In this view, the continuity of the process that maps data to estimators is crucial,
rather than the complexity of the hypothesis space.  Different notions
of stability can be considered, depending on the data perturbation and
metric considered \citep{kn2002}. Interestingly, the stability and complexity approaches to
characterizing the {\em learnability} of problems are not at odds with
each other, and can be shown to be equivalent(
\citet{PogRifMukNiy04} and \citet{10.5555/1756006.1953019}).

In modern machine learning overparameterized models, with a larger number of parameters
than the size of the training data, have become common. The ability of these models to
generalize is well explained by classical statistical learning theory as long as some form of
regularization is used in the training process \citep{buhvan21, steinwart2008support}. However, it
was recently shown - first for deep networks \citep{zhang}, and more recently for
kernel methods \citep{Belkin15849} - that learning is possible in the absence of regularization,
i.e., when perfectly fitting/interpolating the data.
%The capacity of generalize while exactly fitting the training data is the key property of deep networks
%which underlies several of their unusual features, starting with optimization,
Much recent work in statistical learning theory has tried to find theoretical ground for this
empirical finding. Since learning using models that interpolate is not exclusive to
deep neural networks, we study generalization in the presence of
interpolation in the case of  kernel methods. We study both linear and
kernel least squares problems in this paper.

\paragraph*{Our Contributions:}

\begin{itemize}
	\item We characterize the generalization properties of
          interpolating solutions for
		  linear and kernel least squares problems using a stability approach. While
		  the (uniform) stability properties of regularized kernel methods are well known \citep{BE:2001},
		  we study interpolating solutions of the unregularized  ("ridgeless")
		  regression problems.
	\item We obtain an upper bound on the stability of interpolating solutions, and show
	      that this upper bound is minimized by the minimum norm interpolating solution.
	      This also means that among all interpolating solutions, the minimum norm solution has
	      the best test error. In particular, the same conclusion is also true for gradient descent,
	      since it converges to the minimal norm solution in the setting we consider, see e.g. \cite{rosvil15}.
	\item Our stability bounds show that the average stability of the minimum norm solution
	      is controlled by the condition number of the empirical kernel matrix. It is well known that
	      the numerical stability of the least squares solution is governed by the condition number of
	      the associated kernel matrix (see \citet{DescentCondition}). Our results show that the condition number also controls
	      stability (and hence, test error) in a statistical sense.
\end{itemize}

\paragraph*{Organization:}
In section \ref{sec:sl_erm_definitions}, we
introduce basic ideas in statistical learning and empirical risk minimization, as well as
the notation used in the rest of the paper. In section \ref{sec:stability_err_bds}, we briefly
recall some definitions of stability. In section \ref{sec:CVkernels}, we study the stability of
interpolating solutions to kernel least squares and show that the minimum norm solutions
minimize an upper bound on the stability. In section \ref{sec:discussion} we discuss
our results in the context of recent work on high dimensional
regression. We conclude
in section \ref{sec:conclusions}.

\section{Statistical Learning and  Empirical Risk Minimization} \label{sec:sl_erm_definitions}
We begin by recalling the basic ideas in statistical learning theory.  In this setting, $X$ is the space of
features, $Y$ is the space of targets or labels, and there is an unknown probability distribution $\mu$
on the product space $Z =  X \times Y$.  In the following, we consider $X=  \R^d$ and $Y = \R$.
%The measure $\mu$ defines an unknown {\it true function} $T(x) =
%\int_Y y d \mu(y | x)$ mapping $X$ into $Y$, with $\mu(y | x)$ the
%conditional probability measure on $Y$.
The distribution $\mu$ is fixed but unknown, and we are given a
training set $S$ consisting of $n$ samples (thus $|S| = n$) drawn
i.i.d.  from the probability distribution on $Z^n$,
$ S = (z_i)_{i=1}^n=(\x_i,y_i)_{i=1}^n. $ Intuitively, the goal of
supervised learning is to use the training set $S$ to ``learn'' a
function $f_S$ that evaluated at a new value $\x_{new}$ should predict
the associated value of $y_{new}$, i.e.
$ y_{new} \approx f_S(\x_{new}).$

 The loss is a function $V : {\cal F} \times Z \to [0,\infty)$, where ${\cal F}$ is the space of
measurable functions from $X$ to $Y$, that measures how well a function performs on a data point. We define  a hypothesis space
${\cal H}\subseteq {\cal F}$ where algorithms  search for solutions.
With the above notation, the {\it expected risk} of $f$  is defined as $I[f] = \E_z V(f,z)$
which is the expected loss on a new sample drawn according to the data distribution $\mu$.
In this setting, statistical learning can be seen as the problem of finding an approximate
minimizer of the expected risk
%solution of the problem
%\begin{equation}\label{expmin}
%\min_{f\in \hh} I[f]
%\end{equation}
given a training set $S$.  A classical approach to derive an approximate solution
is empirical risk minimization (ERM) where we minimize the empirical risk $I_S[f] = \frac{1}{n} \sum_{i=1}^n V(f,z_i)$.
%This approach  is based on the simple idea of replacing the expected risk in~\eqref{expmin} with the empirical risk
%$$
%I_S[f] = \frac{1}{n} \sum_{i=1}^n V(f,z_i),
%$$
%hence deriving the problem
%\begin{equation}\label{ERM}
%\min_{f\in \hh} I_S[f] = \min_{f\in \hh} \frac{1}{n} \sum_{i=1}^n V(f,z_i)
%\end{equation}

A natural error measure for our ERM solution $f_S$ is the {expected excess risk}
$
\E_S[I[f_S]- \min_{f\in \hh} I[f]].
$
Another common error measure is the {expected generalization error/gap} given by
$
\E_S[I[f_S]-  I_S[f_S]].
$
These two error measures are closely related since, the expected excess risk is easily bounded by the expected generalization error (see Lemma~\ref{expriskgen}).

\subsection{Kernel Least Squares  and Minimal Norm Solution} \label{subsec:KLS_min_norm}
%In this paper, { we assume that the loss function $V$ is the
%square loss}, that is,
%$
%V(f,z)= (y- f(\x))^2.
%$
%Choosing the square loss,
%%and considering a reproducing kernel Hilbert space as hypothesis space,
%ERM becomes:
%%a nonlinear extension of least squares
%\begin{equation}\label{NOLS}
%\min_{f\in \hh} \frac{1}{n} \sum_{i=1}^n (f(\x_i) - y_i)^2.
%\end{equation}
The focus in this paper is on the kernel least squares problem.
We assume the loss function $V$ is the square loss, that is,
$V(f,z)= (y- f(\x))^2.$ The hypothesis space is assumed to be a
reproducing kernel Hilbert space, defined by a positive definite kernel $K: X\times X\to \R$ or an associated feature map $\Phi:X\to {\cal H}$, such
that $K(\x,\x')= \scal{\Phi(\x)}{\Phi(\x')}_{\cal H}$ for all $\x,\x'\in X$,
where $\scal{\cdot}{\cdot}_{\cal H}$ is the inner product in ${\cal H}$. In
this setting, functions are linearly parameterized, that is there
exists $w\in {\cal H}$ such that $f(\x)=\scal{w}{\Phi(\x)}_{\cal H}$ for all
$x\in X$.

%A simple yet relevant example are linear functions $f(\x)=\w^\top \x $, that correspond to ${\cal H}=\R^d$ and $\Phi$ the identity map.

The ERM problem typically has multiple solutions, one of which is the minimal norm solution:
\begin{equation}\label{NOLS_ps}
f_S^\dagger=\argmin_{f\in {\cal M}} \nor{f}_\hh, \quad \quad {\cal M}= \argmin_{f\in \hh} \frac{1}{n} \sum_{i=1}^n (f(\x_i) - y_i)^2.
\end{equation}
Here $\nor{\cdot}_\hh$ is the norm on $\hh$ induced by the inner product. The minimal norm solution can be shown to be unique and
% The minimal norm solutions
satisfy a representer theorem, that is  for all $\x\in X$:
%The solutions to the ERM problem also satisfy a representer theorem, that is for all $\x \in X$
\begin{equation}\label{NOLS_rep}
f_S^\dagger(\x)=\sum_{i=1}^n K(\x,\x_i) c_{S, i}, \quad \quad \mathbf{c}_S= \Kn^{\dagger} \yn
\end{equation}
where $\mathbf{c}_S= (c_{S, 1}, \dots, c_{S, n}), \y = (y_1 \ldots y_n) \in \R^n$, $\Kn$ is the $n$ by $n$ matrix with entries $\Kn_{ij}=K(\x_i,\x_j)$, $i,j=1, \dots, n$,
and $\Kn^\dagger$ is the Moore-Penrose pseudoinverse of $\Kn$.
If we assume $n \le d$ and that we have $n$ linearly independent data features, that is the rank of $\Xn$ is $n$, then  it is possible to show  that
for many kernels one can replace $\Kn^{\dagger} $ by $\Kn^{-1}$ (see Remark~\ref{inv}). Note that invertibility is necessary and sufficient for interpolation.
That is, if $\Kn$ is invertible, $f_S^\dagger(\x_i) =  y_i$ for all $i=1, \dots, n$, in which case the training error in~\eqref{NOLS_ps} is zero.
%We illustrate this  in the case of linear kernels/functions.
\begin{remark}[Pseudoinverse for underdetermined linear systems]
%For linear functions, problem~\eqref{NOLS}  is simply the linear least squares method
%\begin{equation}\label{OLS}
%\min_{\w \in \R^d} \frac{1}{n} \sum_{i=1}^n  (\w^\top \x_i  - y_i)^2.
%% = \nor{\Xn \w = \yn}^2.
%\end{equation}
A simple yet relevant example are linear functions $f(\x)=\w^\top \x $, that correspond to ${\cal H}=\R^d$ and $\Phi$ the identity map.
If the rank of $\Xn \in \R^{d \times n}$ is $n$, then any interpolating solution $\w_S$ satisfies $\w_S^\top \x_i  = y_i$ for all $i=1, \dots, n$,  and the minimal norm solution, also called Moore-Penrose solution,  is given by $ (\w_S^\dagger)^\top= \yn^\top \Xn^\dagger $
where the pseudoinverse $ \Xn^\dagger$ takes the form $\Xn^\dagger= \Xn^\top(\Xn \Xn^\top)^{-1}.$
\end{remark}

\begin{remark}[Invertibility of translation invariant kernels] \label{remark:invertible_kernel}\label{inv}
Translation invariant kernels are a family of kernel functions given by $K(\x_1,\x_2) = k(\x_1 - \x_2)$
where $k$ is an even function on $\R^d$. Translation invariant kernels are Mercer kernels
(positive semidefinite) if the Fourier transform of $k(\cdot)$ is non-negative.
For {Radial Basis Function} kernels ($K(\x_1,\x_2) = k(||\x_1 - \x_2||)$)
we have the additional property due to  Theorem 2.3 of
\cite{Micchelli86} that for distinct points $\x_1, \x_2, \ldots, \x_n \in \R^d$ the
kernel matrix $\Kn$ is non-singular and thus invertible.
\end{remark}

 The above discussion is directly related to regularization approaches.

\begin{remark}[Stability and Tikhonov regularization]\label{Tik}
Tikhonov regularization is used to prevent potential unstable behaviors.
In the above setting, it corresponds to replacing Problem~\eqref{NOLS_ps} by
$
\min_{f\in \hh}\frac 1 n \sum_{i=1}^n (f(\x_i)-y_i)^2+\lambda \nor{f}^2_\hh
$
where the corresponding unique solution is given by
$
f_S^\lambda(\x)=\sum_{i=1}^n K(\x,\x_i) c_i, \quad \quad \mathbf{c}= (\Kn+\lambda \mathbf{I}_n )^{-1} \yn.
$
In contrast to ERM solutions,  the above approach prevents interpolation.
The properties of the corresponding estimator are well known. In this paper, we complement these results focusing on the case $\lambda\to 0$.
\end{remark}

Finally, we end by recalling the connection between minimal norm and the gradient descent.

\begin{remark}[Minimum norm and gradient
  descent] \label{remark:gd_min_norm} In our setting, it is well
  know the both batch and stochastic gradient iterations converge
  exactly to the minimal norm solution, when multiple solutions exist,
  see e.g. \cite{rosvil15}. Thus, a study of  the  properties of
  minimal norm solutions explain the properties of the solution to
  which gradient descent converges. In particular, when ERM has multiple
  interpolating solutions,  gradient descent
  converges to a solution that minimizes a bound on stability, as we show next.
\end{remark}

\section{Error Bounds via Stability} \label{sec:stability_err_bds}
In this section, we recall basic results relating the learning and stability properties of Empirical Risk Minimization (ERM).
Throughout the paper, we assume that ERM achieves a minimum, albeit
the extension to almost minimizer is possible \citep{Mukherjee2006} and important for
exponential-type loss functions \citep{Foundations}. We do not assume the expected risk to achieve a minimum.
Since we will be considering leave-one-out stability in this section, we look at solutions to ERM %\eqref{ERM}
over the complete training set $S=\{z_1, z_2, \dots, z_n\}$ and the leave one out
training set $\Si=\{z_1, z_2, \dots, z_{i-1}, z_{i+1}, \dots, z_n \}$

The excess risk of ERM can  be easily related to its stability properties.
Here,  we follow the definition laid out in \cite{Mukherjee2006} and say that an
algorithm is Cross-Validation leave-one-out ($\cv$) stable in expectation,
if there exists $\beta_{CV}>0$ such that  for all $i=1, \dots, n$,
\begin{equation}\label{cvst}
 \E_S[V(f_{\Si}, z_i)- V(f_S, z_i)] \le \beta_{CV}.
\end{equation}
This definition is justified by the following result that bounds the excess risk of a
learning algorithm by its average $\cv$ stability.

%
%Let
%$$
%\Si = \{z_1, \dots, z_{i-1},z_{i+1}, \dots, z_n \}.
%$$
%and $f_{\Si}$ the corresponding ERM solution.
%Then the following hold.

\begin{lemma}[Excess Risk  \& $\cv$ Stability]\label{expriskgen} \label{lemmone}
% \cite{Mukherjee2006}]
For all $i=1, \dots, n$,
\begin{equation}\label{cvlooexcess}
\E_S[I[f_{\Si}]-\inf_{f\in \hh} I[f]]\le
%\E_S[I[f_{S}]-I_S[f_S]] =
\E_S[V(f_{\Si}, z_i)- V(f_S, z_i)].
\end{equation}
%\begin{equation}\label{cvlooexcess}
%\E_S[I[f_{S}]-\inf_{f\in \hh} I[f]]\le
%\E_S[I[f_{S}]-I_S[f_S]] = \E_S[V(f_{\Si}, z_i)- V(f_S, z_i)].
%\end{equation}
\end{lemma}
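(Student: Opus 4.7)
The plan is to exploit the i.i.d.\ structure of $S$ twice: once to rewrite the expected risk of the leave-one-out estimator as a one-sample expected loss, and once to rewrite the expected training loss on the point $z_i$ as the expected empirical risk of the full ERM solution. Once both sides are in this form, the definition of ERM does the rest.

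First I would use the fact that $z_i$ is drawn i.i.d.\ from $\mu$ and is independent of the leave-one-out sample $S_i$. Conditioning on $S_i$ and then integrating out $z_i$ gives
\begin{equation*}
\E_S[V(f_{S_i}, z_i)] \;=\; \E_{S_i}\bigl[\E_{z_i}[V(f_{S_i}, z_i) \mid S_i]\bigr] \;=\; \E_{S_i}[I[f_{S_i}]] \;=\; \E_S[I[f_{S_i}]].
\end{equation*}
So the left-hand side of the claimed inequality equals $\E_S[V(f_{S_i}, z_i)] - \inf_{f\in\hh} I[f]$, and it suffices to prove $\E_S[V(f_S, z_i)] \le \inf_{f\in\hh} I[f]$.

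Second, by the symmetry of the joint distribution of $S$ under permutations of its coordinates, $\E_S[V(f_S, z_i)]$ does not depend on $i$ and equals the average over $i$, namely $\E_S[I_S[f_S]]$. Now the ERM property enters: for any fixed $f \in \hh$, $I_S[f_S] \le I_S[f]$ pointwise in $S$, so taking expectations and using that $\E_S[I_S[f]] = I[f]$ for $f$ not depending on $S$, we get $\E_S[I_S[f_S]] \le I[f]$ for every $f \in \hh$. Taking the infimum over $f \in \hh$ yields $\E_S[V(f_S, z_i)] \le \inf_{f \in \hh} I[f]$. Subtracting gives the lemma.

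There is essentially no obstacle here: the only subtle point is making sure the swap in the second step is legitimate, which just requires $V \ge 0$ (Tonelli) together with the hypothesis that ERM achieves its minimum so that $f_S$ is well-defined as a measurable function of $S$; the infimum over $\hh$ on the right is used rather than a minimum precisely because we are not assuming that $I$ attains its minimum in $\hh$.
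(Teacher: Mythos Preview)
Your proof is correct and matches the paper's argument essentially step for step: the paper also uses the identity $\E_S[I[f_{\Si}]]=\E_S[V(f_{\Si},z_i)]$ from independence of $z_i$ and $\Si$, the symmetry identity $\E_S[V(f_S,z_i)]=\E_S[I_S[f_S]]$, and the ERM bias inequality $\E_S[I_S[f_S]]\le\inf_{f\in\hh}I[f]$ (which the paper isolates as a separate preliminary observation, whereas you fold it directly into the proof). The only cosmetic difference is the order of presentation---the paper adds and subtracts $\E_S[I_S[f_S]]$ first and then rewrites each piece, while you rewrite first and then compare---but the content is identical.
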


%We present the proof of the above Lemma in Appendix A.2 due to lack of space.
%Below we discuss some more aspects of stability and its connection to other quantities
%in statistical learning theory.

%\begin{remark}[$\cv$ stability in expectation and in probability]
%In \cite{Mukherjee2006}, $\cv$ stability  is defined in probability, that is
%there exists $\beta^P_{CV} >0$,  $0<\delta^P_{CV}\le 1$ such that
% $$
% \PP_S\{ |V(f_{\Si}, z_i)- V(f_S, z_i)| \ge \beta^P_{CV}\}\le \delta^P_{CV}.
% $$
% Note that the absolute value is not needed for ERM since almost positivity
% holds \cite{Mukherjee2006}, that is
% $
% V(f_{\Si}, z_i)- V(f_S, z_i)>0.
% $
%Then  $\cv$ stability in probability and in expectation  are clearly related
%and indeed equivalent for bounded loss functions. $\cv$  stability in
%expectation~\eqref{cvst} is what we study in the following sections.
%\end{remark}

\begin{remark}[Connection to  uniform stability and other notions of stability]
  Uniform stability, introduced by \cite{BE:2001}, corresponds in our
  notation to the assumption that there exists $\beta_u>0$ such that
  for all $i=1, \dots, n$,
  $ \sup_{z\in Z} |V(f_{\Si}, z)- V(f_S,z)|\le \beta_u.  $ Clearly
  this is a strong notion implying most other definitions of
  stability.  We note that there are number of different notions of
  stability. We refer the interested reader to \cite{kn2002} ,
  \cite{Mukherjee2006}.
\end{remark}

We present the proof of Lemma \ref{expriskgen} in Appendix A.2 due to lack of space.
In Appendix A, we also discuss other definitions of stability and their connections to
concepts in statistical learning theory like generalization and learnability.

\section{$\cv$ Stability of Kernel Least Squares}
\label{sec:CVkernels}

In this section we analyze the expected $\cv$  stability of
interpolating solutions to the kernel least squares problem, and obtain an upper bound
on their stability. We show that this upper bound on the expected $\cv$ stability is
smallest for the minimal norm interpolating solution \eqref{NOLS_ps} when compared to
other interpolating solutions to the kernel least squares problem.

We have a dataset $S=\{(\x_i, y_i)\}_{i=1}^n$ and we want to find a
mapping $f \in \hh$, that minimizes the empirical least squares
risk. Here $\hh$ is a reproducing kernel hilbert space (RKHS) defined
by a positive definite kernel $K: X\times X\to \R$. All interpolating
solutions are of the form
$\hat{f}_S (\cdot) = \sum_{j=1}^n \hat{c}_{S,j} K(\x_j, \cdot)$, where
$\hat{\mathbf{c}}_S = \Kn^\dagger \y +(\mathbf{I} - \Kn^\dagger
\Kn)\vvec$. Similarly, all interpolating solutions on the leave one
out dataset $\Si$ can be written as
$\hat{f}_{\Si} (\cdot) = \sum_{j=1, j \neq i}^n \hat{c}_{\Si,j}
K(\x_j, \cdot)$, where
$\hat{\mathbf{c}}_{\Si} = \Kn_{\Si}^\dagger \y_i +(\mathbf{I} -
\Kn_{\Si}^\dagger \Kn_{\Si})\vvec_i$.  Here $\Kn, \Kn_{\Si}$ are the
empirical kernel matrices on the original and leave one out datasets
respectively.  We note that when $\vvec = \mathbf{0}$ and
$\vvec_i = \mathbf{0}$, we obtain the minimum norm interpolating
solutions on the datasets $S$ and $\Si$.

\begin{theorem}[Main Theorem]\label{thm:main}
Consider the kernel least squares problem with a bounded kernel
and bounded outputs $y$, that is there exist $\kappa, M >0$ such that
\begin{equation}\label{boundedness}
K(\x,\x')\le \kappa,\quad \quad \quad  |y|\le M,
\end{equation}
almost surely.   Then for any interpolating solutions $\hat{f}_{\Si}, \hat{f}_S$,
\begin{equation}\label{eqn:kernel_cv_theorem}
\E_S[V(\hat{f}_{\Si}, z_i) - V(\hat{f}_S, z_i)] \leq\beta_{CV} ( \Kn^\dagger, \y, \vvec, \vvec_i )
\end{equation}
This bound $\beta_{CV}$ is minimized when $\vvec=\vvec_i= \mathbf{0}$, which corresponds to
the minimum norm interpolating solutions $f_S^\dagger, f_{\Si}^\dagger$.
For the minimum norm solutions we have $\beta_{CV} = C_1 \beta_1 + C_2 \beta_2$,
where $\beta_1 = \E_S \left[ ||\Kn^{\frac{1}{2}}||_{op}
||\Kn^\dagger||_{op} \times \textrm{cond}(\Kn) \times ||\y||\right]$
and, $\beta_2 = \E_S \left[ ||\Kn^{\frac{1}{2}}||_{op}^2
||\Kn^\dagger||_{op}^2 \times (\textrm{cond}(\Kn))^2 \times ||\y||^2 \right]$,
and $C_1,C_2$ are absolute constants that do not depend on either $d$ or $n$.
\end{theorem}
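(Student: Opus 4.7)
The plan is to use interpolation to collapse the CV-stability quantity onto a single loss term, then absorb all of the $\vvec$- and $\vvec_i$-dependence into a coefficient-based Cauchy--Schwarz estimate whose slack is monotone in $\|\vvec\|$ and $\|\vvec_i\|$. Because $\hat f_S$ interpolates $S$, $V(\hat f_S,z_i)=0$ and the stability quantity reduces to $\E_S[(y_i-\hat f_{\Si}(\x_i))^2]$. Using $y_i=\hat f_S(\x_i)$ together with the reproducing-kernel inequality $|f(\x_i)|\le\sqrt{K(\x_i,\x_i)}\,\|f\|_\hh\le\sqrt{\kappa}\,\|f\|_\hh$ gives $(y_i-\hat f_{\Si}(\x_i))^2\le 2\kappa\bigl(\|\hat f_S\|_\hh^2+\|\hat f_{\Si}\|_\hh^2\bigr)$, and bounding each RKHS norm in coefficient form, $\|\hat f_S\|_\hh^2=\hat{\mathbf c}_S^\top\Kn\hat{\mathbf c}_S\le\|\Kn\|_{op}\|\hat{\mathbf c}_S\|^2$ (and similarly for $\hat f_{\Si}$), produces an upper bound of the advertised form $\beta_{CV}(\Kn^\dagger,\y,\vvec,\vvec_i)$.

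The minimization claim then comes for free from the orthogonality built into the parametrization. Because $\Kn^\dagger\y\in\mathrm{range}(\Kn)$ and $(\mathbf{I}-\Kn^\dagger\Kn)\vvec\in\mathrm{null}(\Kn)$ are orthogonal in $\R^n$, Pythagoras gives $\|\hat{\mathbf c}_S\|^2=\|\Kn^\dagger\y\|^2+\|(\mathbf{I}-\Kn^\dagger\Kn)\vvec\|^2$, and the analogous identity holds for $\hat{\mathbf c}_{\Si}$. Hence the upper bound is non-decreasing in both $\|\vvec\|$ and $\|\vvec_i\|$ and is jointly minimized at $\vvec=\vvec_i=\mathbf 0$, i.e., at the minimum-norm interpolators $f_S^\dagger$ and $f_{\Si}^\dagger$.

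For the explicit form at the minimum I would then bound $\|\Kn^\dagger\y\|\le\|\Kn^\dagger\|_{op}\|\y\|$, yielding $\|\hat f_S^\dagger\|_\hh^2\le\|\Kn\|_{op}\|\Kn^\dagger\|_{op}^2\|\y\|^2=\textrm{cond}(\Kn)\|\Kn^\dagger\|_{op}\|\y\|^2$, and pass from $\Kn_{\Si}^\dagger$ to $\Kn^\dagger$ via a Schur-complement identity on the $(n-1)\times(n-1)$ principal submatrix; this is what contributes the additional $\textrm{cond}(\Kn)$ factor in the stated bound. Expanding $(y_i-\hat f_{\Si}^\dagger(\x_i))^2=y_i^2-2y_i\hat f_{\Si}^\dagger(\x_i)+\hat f_{\Si}^\dagger(\x_i)^2$, applying $|\hat f_{\Si}^\dagger(\x_i)|\le\sqrt{\kappa}\,\|\hat f_{\Si}^\dagger\|_\hh$, and using $|y_i|\le M$ on the cross term isolates a piece linear in $\|\y\|$ (contributing $\beta_1$ with one factor of $\|\Kn^{1/2}\|_{op}\|\Kn^\dagger\|_{op}\textrm{cond}(\Kn)$) and a piece quadratic in $\|\y\|$ (contributing $\beta_2$ with those factors squared); taking $\E_S$ and absorbing numerical prefactors and the $M^2$ constant into $C_1,C_2$ closes the argument.

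The main obstacle I anticipate is the passage from $\|\Kn_{\Si}^\dagger\|_{op}$ to $\|\Kn^\dagger\|_{op}$: Cauchy interlacing alone gives only $\|\Kn_{\Si}^\dagger\|_{op}\le\|\Kn^\dagger\|_{op}$ in the strictly invertible regime and does not naturally produce the extra $\textrm{cond}(\Kn)$ factor in the theorem, so the argument must use a block/Schur-complement expansion of $\Kn^{-1}$ (whose Schur complement has spectrum controlled by $\lambda_{\max}/\lambda_{\min}$) or a pseudoinverse-aware perturbation bound, with extra care when $\Kn_{\Si}$ is rank-deficient. A smaller bookkeeping point is carrying the constant $M^2$ term cleanly through the expansion; since $M\le\|\y\|$ almost surely, it can be absorbed into $C_1\beta_1$.
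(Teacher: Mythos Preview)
Your route is genuinely different from the paper's. The paper does not collapse the CV quantity via interpolation; instead it uses the local Lipschitz estimate
\[
V(\hat f_{\Si},z_i)-V(\hat f_S,z_i)\ \le\ \kappa\bigl(2M+\kappa(\|\hat f_S\|_\hh+\|\hat f_{\Si}\|_\hh)\bigr)\,\|\hat f_S-\hat f_{\Si}\|_\hh,
\]
and then devotes the main technical lemma to bounding $\|\hat f_S-\hat f_{\Si}\|_\hh$ directly. Writing $\Kn_{\Si}$ as a rank-two perturbation of $\Kn$, it applies Meyer's rank-one pseudoinverse update formulas twice to compute $\Kn^\dagger-(\Kn_{\Si})^\dagger$ in closed form, obtaining $\|f_S^\dagger-f_{\Si}^\dagger\|_\hh\le\|\Kn^{1/2}\|_{op}\|\Kn^\dagger\|_{op}\,\mathrm{cond}(\Kn)\,\|\y\|$. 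Both the minimization over $(\vvec,\vvec_i)$ and the explicit $\beta_1,\beta_2$ split come out of this single difference-of-solutions bound: the $2M$ prefactor times the difference gives $\beta_1$, and the $\kappa(\|f_S\|+\|f_{\Si}\|)$ prefactor times the difference gives $\beta_2$. No Schur-complement step is used, and no stand-alone $M^2$ term is ever produced.

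Your approach has two soft spots. First, in the parametrization $\hat{\mathbf c}_S=\Kn^\dagger\y+(\mathbf I-\Kn^\dagger\Kn)\vvec$, the component $(\mathbf I-\Kn^\dagger\Kn)\vvec$ lies in $\mathrm{null}(\Kn)$, so $\|\hat f_S\|_\hh^2=\hat{\mathbf c}_S^\top\Kn\hat{\mathbf c}_S$ is literally independent of $\vvec$; the Pythagorean minimization you invoke only appears after you loosen $\hat{\mathbf c}_S^\top\Kn\hat{\mathbf c}_S$ to $\|\Kn\|_{op}\|\hat{\mathbf c}_S\|^2$, i.e.\ the $\vvec$-dependence is manufactured by the slack rather than present in the quantity you started from. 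By contrast, the paper's bound on $\|\hat f_S-\hat f_{\Si}\|_\hh$ carries a term $\|\Kn^{1/2}\kvec\kvec^\dagger\vvec_i\|$ that does not vanish, so the minimization there is less artificial. Second, the direct expansion $(y_i-\hat f_{\Si}^\dagger(\x_i))^2=y_i^2-2y_i\hat f_{\Si}^\dagger(\x_i)+\hat f_{\Si}^\dagger(\x_i)^2$ leaves a free-standing $y_i^2\le M^2$, and your claim that this can be absorbed into $C_1\beta_1$ via ``$M\le\|\y\|$'' does not go through with an absolute constant (nothing forces $M$ to be dominated by $\|\Kn^{1/2}\|_{op}\|\Kn^\dagger\|_{op}\,\mathrm{cond}(\Kn)$). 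The paper's Lipschitz route avoids this because every term already carries a factor of $\|\hat f_S-\hat f_{\Si}\|_\hh$. Finally, your worry about manufacturing the extra $\mathrm{cond}(\Kn)$ factor via Schur complements is misplaced: Cauchy interlacing already gives $\|\Kn_{\Si}^\dagger\|_{op}\le\|\Kn^\dagger\|_{op}$ in the invertible regime, which is \emph{tighter} than the stated bound and still implies it since $\mathrm{cond}(\Kn)\ge1$.
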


In the above theorem $||\Kn||_{op}$ refers to the operator norm of the kernel matrix $\Kn$,
$||\y||$ refers to the standard $\ell_2$ norm for $\y \in \R^n$, and $\textrm{cond}(\Kn)$ is the
condition number of the matrix $\Kn$.

We can combine the above result with Lemma \ref{lemmone} to obtain the following bound
on excess risk for minimum norm interpolating solutions to the kernel least squares problem:

\begin{corollary}
The excess risk of the minimum norm interpolating kernel least squares solution can
be bounded as:
\[ \E_S \left[ I[f_{\Si}^\dagger] - \inf_{f \in \hh} I[f] \right] \leq C_1 \beta_1 + C_2 \beta_2 \]
where $\beta_1, \beta_2$ are as defined previously.
\end{corollary}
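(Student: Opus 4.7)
The plan is to chain Lemma~\ref{lemmone} with Theorem~\ref{thm:main} directly; this corollary is essentially a combination step rather than a new argument. First I would instantiate Lemma~\ref{lemmone} at the minimum norm interpolating solutions $f_S^\dagger$ and $f_{\Si}^\dagger$ in place of the generic ERM solutions $f_S$ and $f_{\Si}$. One small thing worth checking before invoking the lemma is that $f_S^\dagger$ is indeed an ERM solution, i.e.\ lies in $\mathcal{M}$ as defined in \eqref{NOLS_ps}: this holds by construction, since $f_S^\dagger$ is a minimizer of the empirical risk (it is in fact an interpolator, so the empirical risk is zero), and $f_{\Si}^\dagger$ plays the same role on the leave-one-out dataset $\Si$. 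Thus Lemma~\ref{lemmone} yields, for any fixed $i \in \{1, \dots, n\}$,
\[
\E_S\bigl[I[f_{\Si}^\dagger] - \inf_{f \in \hh} I[f]\bigr] \;\le\; \E_S\bigl[V(f_{\Si}^\dagger, z_i) - V(f_S^\dagger, z_i)\bigr].
\]

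Next, I would control the right-hand side using Theorem~\ref{thm:main}. The theorem provides the general bound $\beta_{CV}(\Kn^\dagger, \y, \vvec, \vvec_i)$ for arbitrary interpolating solutions parameterized by the free components $\vvec, \vvec_i$, and states that this bound is minimized at $\vvec = \vvec_i = \mathbf{0}$, giving precisely $C_1 \beta_1 + C_2 \beta_2$, and that this choice corresponds to the minimum norm interpolators. Applying this with $\vvec = \vvec_i = \mathbf{0}$, so that $\hat f_S = f_S^\dagger$ and $\hat f_{\Si} = f_{\Si}^\dagger$, gives
\[
\E_S\bigl[V(f_{\Si}^\dagger, z_i) - V(f_S^\dagger, z_i)\bigr] \;\le\; C_1 \beta_1 + C_2 \beta_2.
\]

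Chaining the two inequalities yields the stated bound. There is no real obstacle here: the only subtlety is purely bookkeeping, namely making sure that the same $i$ is used consistently when invoking the two results (Lemma~\ref{lemmone} is stated for every $i$, and so is Theorem~\ref{thm:main}, so any fixed choice works), and that the definitions of $\beta_1$ and $\beta_2$ from the statement of Theorem~\ref{thm:main} carry over verbatim. All the genuine analytic work—bounding the loss difference in terms of the pseudoinverse, the condition number, and the response vector—has already been done inside Theorem~\ref{thm:main}, so the corollary itself reduces to a two-line composition.
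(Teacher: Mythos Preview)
Your proposal is correct and matches the paper's own treatment exactly: the corollary is stated immediately after Theorem~\ref{thm:main} as a direct combination with Lemma~\ref{lemmone}, with no additional argument given. Your added verification that $f_S^\dagger$ and $f_{\Si}^\dagger$ are legitimate ERM solutions (so that Lemma~\ref{lemmone} applies) is a sensible bit of care that the paper leaves implicit.
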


\begin{remark}[Underdetermined Linear Regression]\label{rem:linreg}
In the case of underdetermined linear regression, ie, linear regression where the dimensionality is
larger than the number of samples in the training set, we can prove a version of Theorem \ref{thm:main}
with $\beta_1=\E_S \left[ \nor{\Xn^\dagger}_{op} \nor{\y} \right]$
and $\beta_2= \E_S \left[ \nor{\Xn^\dagger}_{op}^2 \nor{\y}^2 \right]$. Due to space constraints, we present
the proof of the results in the linear regression case in Appendix \ref{app:linreg_stability}.
\end{remark}

\subsection{Key lemmas}

In order to prove Theorem \ref{thm:main} we make use of the following lemmas
to bound the $\cv$ stability using the norms and the difference of the solutions.
%The first is standard, the second is our main estimate.

\begin{lemma}\label{ls_lip}
Under assumption~\eqref{boundedness}, for all $i=1. \dots, n$, it holds that
$$
 \E_{S}[V(\hat{f}_{\Si}, z_i) - V(\hat{f}_S , z_i)] \le \E_{S}\left[ \left( 2M + \kappa \left( \nor{\hat{f}_S}_\hh + \nor{\hat{f}_{\Si}}_\hh \right) \right) \times \kappa \nor{\hat{f}_S-\hat{f}_{\Si}}_\hh \right]
\label{lorenzo2}
$$
\end{lemma}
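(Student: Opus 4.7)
The plan is to reduce the pointwise loss difference to an RKHS-norm difference via the local Lipschitz property of the square loss, and then convert pointwise function values into RKHS norms through the reproducing property. No probabilistic tools are needed beyond monotonicity of expectation; the bulk of the argument is deterministic and carried out before taking $\E_S$.

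First I would factor the square loss difference at $z_i=(\x_i,y_i)$. Writing $a^2-b^2=(a-b)(a+b)$ with $a=y_i-\hat{f}_{\Si}(\x_i)$ and $b=y_i-\hat{f}_S(\x_i)$ gives
\begin{equation*}
V(\hat{f}_{\Si}, z_i)-V(\hat{f}_S, z_i) = \bigl(\hat{f}_S(\x_i)-\hat{f}_{\Si}(\x_i)\bigr)\bigl(2y_i-\hat{f}_S(\x_i)-\hat{f}_{\Si}(\x_i)\bigr).
\end{equation*}
Taking absolute values and using the triangle inequality on the second factor yields
\begin{equation*}
\bigl|V(\hat{f}_{\Si}, z_i)-V(\hat{f}_S, z_i)\bigr| \le \bigl|\hat{f}_S(\x_i)-\hat{f}_{\Si}(\x_i)\bigr|\cdot\bigl(2|y_i|+|\hat{f}_S(\x_i)|+|\hat{f}_{\Si}(\x_i)|\bigr).
\end{equation*}

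Next I would use the reproducing property $g(\x)=\scal{g}{K(\x,\cdot)}_\hh$ together with Cauchy--Schwarz to obtain $|g(\x)|\le \sqrt{K(\x,\x)}\,\nor{g}_\hh$ for every $g\in\hh$. Under assumption~\eqref{boundedness} (read, as is standard, as a bound on the norm of the evaluation functional) this gives $|g(\x)|\le \kappa\,\nor{g}_\hh$ almost surely. Applying this to $g=\hat{f}_S$, $g=\hat{f}_{\Si}$, and to $g=\hat{f}_S-\hat{f}_{\Si}$, and also bounding $|y_i|\le M$, the previous display becomes
\begin{equation*}
\bigl|V(\hat{f}_{\Si}, z_i)-V(\hat{f}_S, z_i)\bigr| \le \kappa\,\nor{\hat{f}_S-\hat{f}_{\Si}}_\hh\cdot\bigl(2M+\kappa(\nor{\hat{f}_S}_\hh+\nor{\hat{f}_{\Si}}_\hh)\bigr).
\end{equation*}

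Finally, I would take $\E_S$ of both sides, using $\E_S[X]\le \E_S[|X|]$, to reach the claim. There is no real obstacle; the only delicate step is the factorization of the square loss difference, which replaces a globally non-Lipschitz loss with a bound whose effective Lipschitz constant depends on $\nor{\hat{f}_S}_\hh+\nor{\hat{f}_{\Si}}_\hh$. This is precisely the feature that will later let Theorem~\ref{thm:main} single out the minimum norm interpolant as the minimizer of the stability upper bound, since any larger-norm interpolating solution directly inflates the right-hand side above.
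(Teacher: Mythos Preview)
Your proposal is correct and follows essentially the same route as the paper: factor the square-loss difference via $a^2-b^2=(a-b)(a+b)$, bound point evaluations by $\kappa\nor{\cdot}_\hh$ through the reproducing property, and take expectations. The only cosmetic difference is that the paper drops the absolute value on the left by invoking the ``almost positivity of ERM'' (i.e.\ $V(\hat f_{\Si},z_i)\ge V(\hat f_S,z_i)$ for ERM solutions), whereas you use the universally valid $\E_S[X]\le \E_S[|X|]$; your version is slightly more elementary and does not rely on any ERM-specific property.
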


\begin{proof}
We begin, recalling that the square loss is locally Lipschitz, that is for all $y, a,a'\in \R$, with
$$
|(y-a)^2- (y-a')^2| \le (2|y| + |a|+|a'|))|a-a'|.
$$
If we apply this result to $f,f'$ in a RKHS $\hh$,
% then if $\nor{f},\nor{f'}\le R$ we have that
$$
|(y-f(\x))^2- (y-f'(\x))^2| \le \kappa(2M + \kappa \left(\nor{f}_\hh+\nor{f'}_\hh \right) ) \nor{f-f'}_\hh.
$$
using the basic properties of a RKHS that for all $f\in\hh$
\begin{equation}\label{H2infty}
|f(\x)| \le \nor{f}_\infty
\le \kappa \nor{f}_\hh
\end{equation}
In particular, we can plug $\hat{f}_{\Si}$ and $\hat{f}_S$ into the above inequality,
and the almost positivity of ERM \citep{Mukherjee2006} will allow us to drop the absolute
value on the left hand side. Finally the desired result follows by taking the expectation over $S$.
\end{proof}

Now that we have bounded the $\cv$ stability using the norms and the difference of the solutions,
we can find a bound on the difference between the solutions to the kernel least squares problem.
This is our main stability estimate.

\begin{lemma} \label{lemma:kernel_stability}
Let $\hat{f}_S, \hat{f}_{\Si}$ be any interpolating kernel least squares solutions on
the full and leave one out datasets (as defined at the top of this section),
then $\nor{\hat{f}_S - \hat{f}_{\Si} }_{\hh} \leq B_{CV}(\Kn^\dagger, \y, \vvec, \vvec_i)$, and
$B_{CV}$ is minimized when $\vvec=\vvec_i =\mathbf{0}$, which corresponds to the minimum
norm interpolating solutions $f_S^\dagger, f_{\Si}^\dagger$.

Also, %for some absolute constant $C$,
\begin{equation}\label{eqn:perturbation_norm}
\nor{f_S^\dagger - f_{\Si}^\dagger }_{\hh} \leq \nor{\Kn^\frac{1}{2}}_{op} \nor{\Kn^\dagger}_{op}
\times \textrm{cond} (\Kn) \times \nor{\y}
\end{equation}
\end{lemma}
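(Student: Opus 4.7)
The plan is to lift the RKHS-norm difference to the coefficient space and then bound it using operator norms and the interpolation property. After padding the coefficient vector of $\hat{f}_{\Si}$ with a zero in position $i$, which I denote by $\tilde{\hat{\mathbf{c}}}_{\Si}\in\R^n$, I can write $\hat{f}_S-\hat{f}_{\Si}=\sum_{j=1}^{n}(\hat{c}_{S,j}-\tilde{\hat{c}}_{\Si,j})K(\x_j,\cdot)$, so the reproducing property gives
\[
\nor{\hat{f}_S-\hat{f}_{\Si}}_{\hh}^{2}
=(\hat{\mathbf{c}}_S-\tilde{\hat{\mathbf{c}}}_{\Si})^{\top}\Kn(\hat{\mathbf{c}}_S-\tilde{\hat{\mathbf{c}}}_{\Si})
\le\nor{\Kn^{1/2}}_{op}^{2}\,\nor{\hat{\mathbf{c}}_S-\tilde{\hat{\mathbf{c}}}_{\Si}}^{2}.
\]
This reduces the problem to controlling the Euclidean norm of the coefficient difference.

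For the minimization over $\vvec$ and $\vvec_i$, I would substitute the two parameterizations and split by the triangle inequality, writing $\nor{\hat{\mathbf{c}}_S-\tilde{\hat{\mathbf{c}}}_{\Si}}$ as the minimum-norm piece $\nor{\mathbf{c}_S-\tilde{\mathbf{c}}_{\Si}}$ (with $\mathbf{c}_S=\Kn^{\dagger}\y$ and $\tilde{\mathbf{c}}_{\Si}$ the zero-padded $\Kn_{\Si}^{\dagger}\y_i$) plus the two nonnegative ``free'' terms $\nor{(I-\Kn^{\dagger}\Kn)\vvec}$ and $\nor{(I-\Kn_{\Si}^{\dagger}\Kn_{\Si})\vvec_i}$. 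Both free terms vanish at $\vvec=\vvec_i=\mathbf{0}$, so the resulting bound $B_{CV}$ is minimized there, which corresponds precisely to the minimum-norm interpolating solutions.

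For the bound \eqref{eqn:perturbation_norm} in the minimum-norm case, I would exploit the interpolation identity. Since $\Kn_{\Si}\mathbf{c}_{\Si}=\y_i$, the padded vector $\tilde{\mathbf{c}}_{\Si}$ satisfies $\Kn\tilde{\mathbf{c}}_{\Si}=\y+(f_{\Si}^{\dagger}(\x_i)-y_i)\evec_i$, while $\Kn\mathbf{c}_S=\y$ (when $\Kn$ is invertible), so
\[
\mathbf{c}_S-\tilde{\mathbf{c}}_{\Si}=\Kn^{-1}(y_i-f_{\Si}^{\dagger}(\x_i))\evec_i,\qquad
\nor{\mathbf{c}_S-\tilde{\mathbf{c}}_{\Si}}\le\nor{\Kn^{\dagger}}_{op}\,|y_i-f_{\Si}^{\dagger}(\x_i)|.
\]
The residual is controlled by $|y_i|+|f_{\Si}^{\dagger}(\x_i)|\le\nor{\y}+\nor{\kvec_i}\,\nor{\Kn_{\Si}^{\dagger}}_{op}\,\nor{\y}$, where $\kvec_i=(K(\x_i,\x_j))_{j\ne i}$. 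Using $\nor{\kvec_i}\le\nor{\Kn}_{op}$ (since $\kvec_i$ is a subvector of $\Kn\evec_i$) together with $\nor{\Kn_{\Si}^{\dagger}}_{op}\le\nor{\Kn^{\dagger}}_{op}$ (Cauchy eigenvalue interlacing for principal submatrices of a PSD matrix) yields $|y_i-f_{\Si}^{\dagger}(\x_i)|\le(1+\mathrm{cond}(\Kn))\nor{\y}$, and the claimed bound follows by combining with the first display and absorbing the factor of $2$ using $\mathrm{cond}(\Kn)\ge 1$.

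The main obstacle is the status of the minimization claim: in the kernel parameterization, $(I-\Kn^{\dagger}\Kn)\vvec$ lies in $\ker\Kn$, and any vector in $\ker\Kn$ maps to $0\in\hh$ under $\mathbf{c}\mapsto\sum_{j}c_j K(\x_j,\cdot)$, so its contribution to the actual RKHS norm of $\hat{f}_S-\hat{f}_{\Si}$ vanishes. The minimization assertion therefore has to be read as a statement about the coefficient-space upper bound $B_{CV}$, not about the underlying function difference. A secondary technical point in the last two steps is extending the invertibility-based calculation to the rank-deficient case by working on $\mathrm{range}(\Kn)$, where $\Kn\Kn^{\dagger}$ still acts as the identity.
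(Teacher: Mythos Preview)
Your argument is correct and takes a genuinely different route from the paper. The paper embeds $\Kn_{\Si}$ as an $n\times n$ matrix with the $i$-th row and column zeroed, writes it as a rank-two perturbation of $\Kn$, and then applies the Meyer/Baksalary pseudoinverse update formulas twice to obtain the closed form $\Kn^\dagger-(\Kn_{\Si})^\dagger=(\Kn^\dagger_{ii})^{-1}\hvec^\top\hvec$ with $\hvec=\bvec^\top\Kn^\dagger$; from this the coefficient-difference bound $\|\Kn^\dagger\|_{op}\,\textrm{cond}(\Kn)\,\|\y\|$ falls out directly, with no extra constant. Your route via the interpolation identity $\Kn(\mathbf{c}_S-\tilde{\mathbf{c}}_{\Si})=(y_i-f_{\Si}^\dagger(\x_i))\evec_i$ plus Cauchy interlacing for $\|\Kn_{\Si}^\dagger\|_{op}\le\|\Kn^\dagger\|_{op}$ is more elementary and avoids the external pseudoinverse machinery entirely, at the cost of landing on $(1+\textrm{cond}(\Kn))$ instead of $\textrm{cond}(\Kn)$; you recover \eqref{eqn:perturbation_norm} only up to a factor of~$2$. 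Since the lemma feeds into Theorem~\ref{thm:main}, which already carries absolute constants $C_1,C_2$, this slack is harmless for the paper's conclusions, though it does not literally establish the inequality as stated.

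Your closing remark about the minimization claim is well taken: the term $(\mathbf{I}-\Kn^\dagger\Kn)\vvec$ lies in $\ker\Kn$ and is killed by $\Kn^{1/2}$, so its contribution to $\|\hat f_S-\hat f_{\Si}\|_\hh$ is zero and the minimization is indeed a statement about the upper bound $B_{CV}$ rather than the function-space norm itself. The paper's proof shares this feature; the only genuine $\vvec_i$-dependence there comes from its $n\times n$ convention, which allows the leave-one-out parameterization to place a coefficient on $K(\x_i,\cdot)$, something your zero-padding convention excludes by construction.
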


Since the minimum norm interpolating solutions minimize both
$\nor{\hat{f}_S}_\hh + \nor{\hat{f}_{\Si}}_\hh$ and
$\nor{\hat{f}_S - \hat{f}_{\Si}}_\hh$ (from lemmas \ref{ls_lip},
\ref{lemma:kernel_stability}), we can put them together to prove
theorem \ref{thm:main}. In the following section we provide the proof
of Lemma \ref{lemma:kernel_stability}.

\paragraph{Simulation:} In order to illustrate that the minimum norm
interpolating solution is the best performing interpolating solution
we ran a simple experiment on a linear regression problem. We
synthetically generated data from a linear model $\y=\w^\top \Xn$,
where $\Xn \in \R^{d \times n}$ was i.i.d $\mathcal{N}(0,1)$. The
dimension of the data was $d=1000$ and there were $n=200$ samples in
the training dataset. A held out dataset of $50$ samples was used to
compute the test mean squared error (MSE). Interpolating solutions were
computed as
$\hat{\w}^\top=\y^\top \Xn^\dagger + \vvec^\top (\mathbf{I} - \Xn
\Xn^\dagger)$ and the norm of $\vvec$ was varied to obtain the
plot. The results are shown in Figure \ref{mse_vs_norm}, where we can
see that the training loss is 0 for all interpolants, but test MSE
increases as $||\vvec||$ increases, with
$(\w^\dagger)^\top = \y^\top \Xn^\dagger$ having the best
performance. The figure reports results averaged over $100$ trials.

\begin{figure}%[tbhp]
	\centering
	\includegraphics[width=0.6\linewidth]{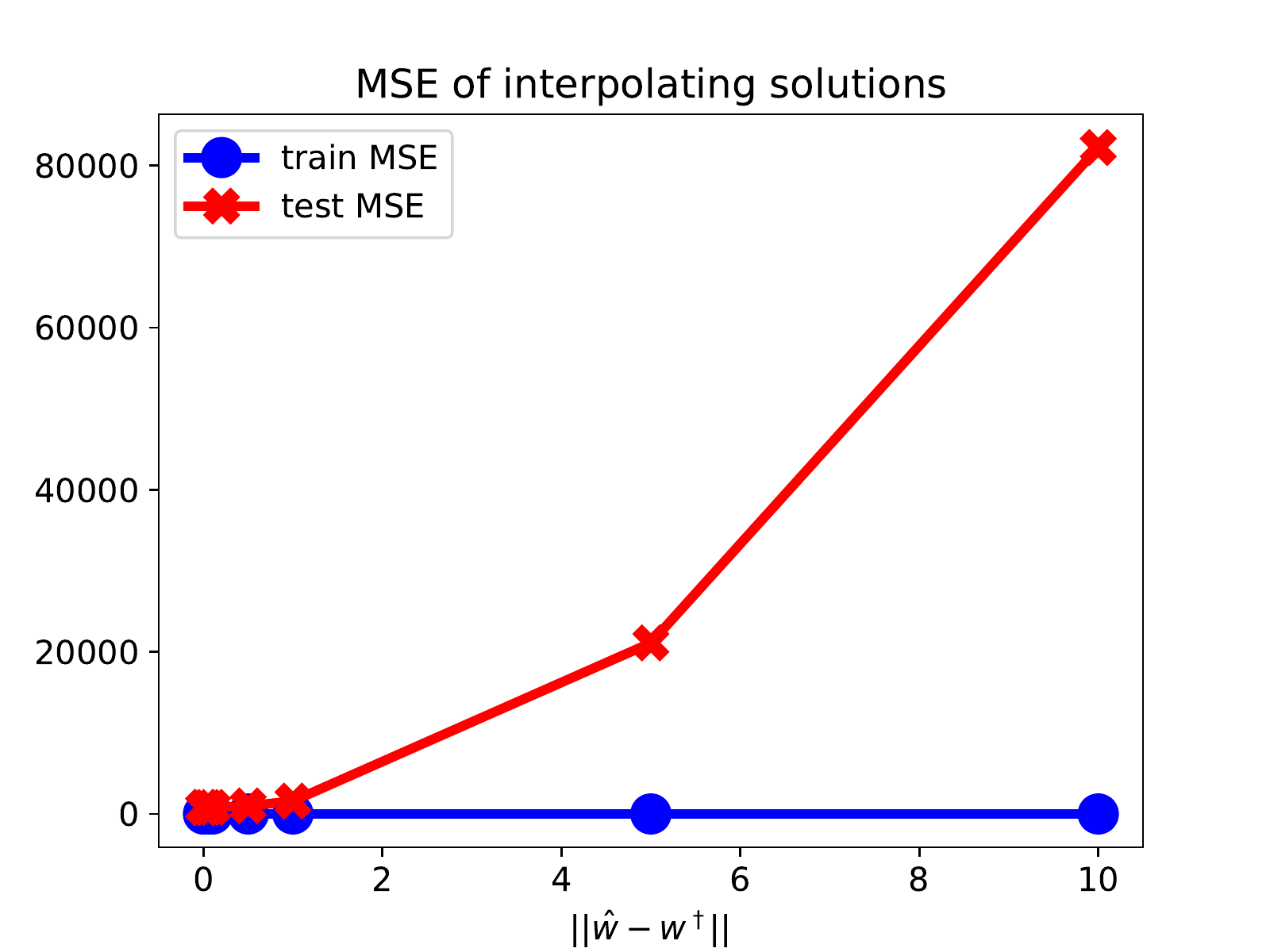}
	\caption{Plot of train and test mean squared error (MSE) vs distance between an interpolating
	solution $\hat{\w}$ and the minimum norm interpolant $\w^\dagger$ of a linear regression problem.
	Data was synthetically generated as $\y=\w^\top \Xn$, where $\Xn \in \R^{d \times n}$ with i.i.d $\mathcal{N}(0,1)$ entries and
	 $d=1000, n=200$. Other interpolating solutions were computed
	 as $\hat{\w}=\y^\top \Xn^\dagger + \vvec^\top (\mathbf{I} - \Xn \Xn^\dagger)$ and
	 the norm of $\vvec$ was varied to obtain the plot. Train MSE is 0 for all interpolants, but
	 test MSE increases as $||\vvec||$ increases, with $\w^\dagger$ having the best performance. This plot
	 represents results averaged over $100$ trials.}
	\label{mse_vs_norm}
\end{figure}

%Due to space constraints we present the proof of Lemma \ref{ls_lip} in
%Appendix A.3

%%Then we need to control the norm of
%%$$
%%\E[\nor{f^\dagger_S}_\hh] \quad \text{and}\quad   \E[\nor{f^\dagger_{\Si}}_\hh],
%%$$
%% and
%%$$
%%%\E[\nor{f^\dagger_S-f^\dagger_{\Si}}_\infty].
%%\E[\nor{f^\dagger_S-f^\dagger_{\Si}}_\hh]
%%$$
%%or rather
%%$$
%%\E[\nor{f^\dagger_S-f^\dagger_{\Si}}_\hh]\le \alpha
%%$$
%%to have leave one out algorithmic stability, using~\eqref{H2infty}.
%%
%%Indeed, provided with the results we could derive from the stability approach we recalled before that
%%$$
%%\E[I[f_S]-I_s[f_S]] \le \kappa(2 M + \kappa \left(
%%\E[\nor{f^\dagger_S}_\hh] + \E[\nor{f^\dagger_{\Si}}_\hh],
%%\right) \E[\nor{f^\dagger_S-f^\dagger_{\Si}}_\hh]
%%\label{lorenzo2}
%%$$

\subsection{Proof of Lemma \ref{lemma:kernel_stability}}

We can write any interpolating solution to the kernel regression problem as $\hat{f}_S(\x) = \sum_{i=1}^n \hat{c}_{S,i} K(\x_i,\x)$
where $\hat{\mathbf{c}}_S = \Kn^\dagger \y + (\mathbf{I} - \Kn^\dagger \Kn) \vvec$, and  $\Kn\in \R^{n \times n}$ is the kernel matrix  $K$ on $S$ and $\vvec$ is any vector in $\R^n$.
i.e. $\Kn_{ij} = K(\x_i,\x_j)$, and $\y \in \R^n$ is the vector $\y = [y_1 \ldots y_n]^\top$.

Similarly, the coefficient vector for the corresponding interpolating solution to the problem
over the leave one out dataset $\Si$ is $\hat{\mathbf{c}}_{\Si} = (\Kn_{\Si})^\dagger \y_i + (\mathbf{I} - (\Kn_{\Si})^\dagger \Kn_{\Si}) \vvec_i$. Where $\y_i = [y_1, \ldots, 0, \ldots y_n]^\top$ and $\Kn_{\Si}$ is the kernel matrix $\Kn$
with the $i^{\textrm{th}}$ row and column set to zero, which is the
kernel matrix for the leave one out training set.

We define $\avec = [-K(\x_1,\x_i), \ldots, -K(\x_n,\x_i)]^\top \in \R^n$
and $\bvec \in \R^n$ as a one-hot column vector
with all zeros apart from the $i^{\textrm{th}}$ component
which is $1$.
Let $\avec_* = \avec+K(\x_i,\x_i)\bvec$. Then, we have:
\begin{equation}
\begin{split}
\Kn_* &= \Kn + \avec\bvec^\top \\
\Kn_{\Si} &= \Kn_* + \bvec\avec_*^\top
\end{split}
\end{equation}
That is, we can write $\Kn_{\Si}$ as a rank-2 update to $\Kn$.
This can be verified by simple algebra, and using the fact that $K$
is a symmetric kernel. Now we are interested in bounding $||\hat{f}_S - \hat{f}_{\Si}||_\hh$.
For a function $h(\cdot)=\sum_{i=1}^m p_i K(\x_i,\cdot) \in \hh$
we have $||h||_\hh = \sqrt{\mathbf{p}^\top \Kn \mathbf{p}}
= ||\Kn^{\frac{1}{2}} \mathbf{p}||$. So we have:
\begin{equation} \label{eqn:fn_norm_diff}
\begin{split}
    ||\hat{f}_S - \hat{f}_{\Si}||_\hh &= || \Kn^{\frac{1}{2}} (\hat{\mathbf{c}}_S - \hat{\mathbf{c}}_{\Si})|| \\
    &= ||\Kn^{\frac{1}{2}} (\Kn^\dagger \y + (\mathbf{I} - \Kn^\dagger \Kn) \vvec - (\Kn_{\Si})^\dagger \y_i - (\mathbf{I} - (\Kn_{\Si})^\dagger \Kn_{\Si}) \vvec_i)|| \\
%    &= ||\Kn^{\frac{1}{2}} (\Kn^\dagger \y - (\Kn_{\Si})^\dagger (\y - y_i \bvec))|| \\
    &= ||\Kn^{\frac{1}{2}} (\Kn^\dagger \y - (\Kn_{\Si})^\dagger \y + y_i (\Kn_{\Si})^\dagger \bvec \\
    &+ (\mathbf{I} - \Kn^\dagger \Kn) (\vvec - \vvec_i) + (\Kn^\dagger \Kn - (\Kn_{\Si})^\dagger \Kn_{\Si})\vvec_i)|| \\
    &= ||\Kn^{\frac{1}{2}} [ (\Kn^\dagger - (\Kn_{\Si})^\dagger) \y + (\mathbf{I} - \Kn^\dagger \Kn) (\vvec - \vvec_i) + (\Kn^\dagger \Kn - (\Kn_{\Si})^\dagger \Kn_{\Si})\vvec_i]||
\end{split}
\end{equation}
Here we make use of the fact that $(\Kn_{\Si})^\dagger \bvec = \mathbf{0}$. If $\Kn$ has full rank (as in Remark \ref{remark:invertible_kernel}), we see that $\bvec$
lies in the column space of $\Kn$ and $\avec$ lies in the column
space of $\Kn^\top$. Furthermore, $\beta =
1 +\bvec^\top \Kn^\dagger \avec =
%1 +\avec^\top \Kn^\dagger \bvec +
%K(\x_i,\x_i) \bvec^\top \Kn^\dagger \bvec =
%\Kn_{ii} (\Kn^\dagger)_{ii} \neq
= 0$. Using Theorem 6 in \citep{Meyer1973} (equivalent to formula 2.1 of
\citet{baksalary2003revisitation}) with $\kvec = \Kn^\dagger \avec, \hvec = \bvec^\top \Kn^\dagger$, we obtain:
\begin{equation} \label{eqn:proof_step1}
\begin{split}
    (\Kn_*)^\dagger &= \Kn^\dagger - \kvec\kvec^\dagger \Kn^\dagger - \Kn^\dagger \hvec^\dagger \hvec + (\kvec^\dagger \Kn^\dagger \hvec^\dagger ) \kvec \hvec \\
    &= \Kn^\dagger - \kvec \kvec^\dagger \Kn^\dagger - \Kn^\dagger \hvec^\dagger \hvec - \kvec \hvec \\
    &= \Kn^\dagger - \Kn^\dagger \hvec^\dagger \hvec
\end{split}
\end{equation}

Above, we use the fact that the operator norm of a rank 1 matrix is given by $||\uvec \vvec^\top||_{op} = ||\uvec||\times ||\vvec||$ and that $\kvec = -\bvec$.
Also, using the corresponding formula from List 2 of \citet{baksalary2003revisitation}, we have $\Kn_*^\dagger \Kn_* = \Kn^\dagger \Kn - \kvec \kvec^\dagger$.

Next, we see that for $\Kn_{\Si}$, $\avec_*$ lies in the column space of $\Kn_*^\top$,
and $\bvec$ does not lie in the column space of $\Kn_*$. This means
we can use Theorem 5 in \citep{Meyer1973} (equivalent to formula 2.4
in \citep{baksalary2003revisitation}) to obtain the expression for $(\Kn_{\Si})^\dagger$

\begin{equation}\label{eqn:proof_step2}
    (\Kn_{\Si})^\dagger = \Kn_*^\dagger - \nu^{-1}( \phi \Kn_*^\dagger \evec \evec^\top + \eta \dvec \fvec^\top) + \nu^{-1}( \lambda \Kn_*^\dagger \evec \fvec^\top  - \lambda \dvec \evec^\top)
\end{equation}

Also, using the corresponding formula from List 2 of \cite{baksalary2003revisitation}, we
have $(\Kn_{\Si})^\dagger \Kn_{\Si} = \Kn_*^\dagger \Kn_*$, which implies that
$\Kn^\dagger \Kn - (\Kn_{\Si})^\dagger \Kn_{\Si} = \kvec \kvec^\dagger$.

Now let us define all the terms in equation ~\eqref{eqn:proof_step2}. We omit some algebraic simplification to save some space.
\begin{equation}
    \begin{split}
        \dvec &= \Kn_*^\dagger \bvec = \Kn^\dagger ( \mathbf{I} - \hvec^\dagger \hvec ) \bvec \\
        \evec &= (\Kn_*^\dagger)^\top \avec_* = (\hvec^\dagger \hvec - \mathbf{I}) \bvec \\
        \fvec &= (\mathbf{I} - \Kn_* \Kn_*^\dagger) \bvec = \hvec^\dagger \hvec \bvec \\
        \lambda &= 1+ \avec_*^\top \Kn_*^\dagger \bvec = 1+ \evec^\top \bvec =\bvec^\top \hvec^\dagger \hvec \bvec \\
        \phi &= \fvec^\top \fvec = \bvec^\top \hvec^\dagger \hvec \bvec = \lambda \\
        \eta &= \evec^\top \evec = 1 - \lambda \\
        \nu &= \lambda^2 + \eta \phi = \lambda
    \end{split}
\end{equation}
This means we can simplify equation ~\eqref{eqn:proof_step2} as
\begin{equation}
\begin{split}
        \Kn_*^\dagger - (\Kn_{\Si})^\dagger &=  \nu^{-1}( \phi \Kn_*^\dagger \evec \evec^\top + \eta \dvec \fvec^\top) - \nu^{-1}( \lambda \Kn_*^\dagger \evec \fvec^\top  - \lambda \dvec \evec^\top) \\
        &= \lambda^{-1} \left( \lambda \Kn_*^\dagger \evec (\evec - \fvec)^\top + \eta \dvec \fvec^\top + \lambda \dvec \evec^\top  \right) \\
        &= \Kn_*^\dagger \evec (\evec - \fvec)^\top + \dvec (\evec - \fvec)^\top + \lambda^{-1} \dvec \fvec^\top
\end{split}
\end{equation}

Putting together ~\eqref{eqn:proof_step1}, ~\eqref{eqn:proof_step2} (and after some algebraic simplification) we get:
\begin{equation}
\begin{split}
\Kn^\dagger - (\Kn_{\Si})^\dagger &= \Kn^\dagger - \Kn_*\dagger + \Kn_*\dagger - (\Kn_{\Si})^\dagger \\
&= \Kn^\dagger \hvec^\dagger \hvec + \lambda^{-1} \Kn^\dagger ( \mathbf{I} - \hvec^\dagger \hvec) \bvec \bvec^\top \hvec^\dagger \hvec \\
&= (\Kn^\dagger_{ii})^{-1} \hvec^\top \hvec
\end{split}
\end{equation}

%The last step follows from $(\Kn_{ii})^{-1}\leq ||\Kn^\dagger||_{op}$
%and $((\Kn^\dagger)_{ii})^{-1}\leq ||\Kn||_{op}$.

Plugging in these calculations into equation \ref{eqn:fn_norm_diff} we get:
\begin{equation}
\begin{split}
    ||\hat{f}_S - \hat{f}_{\Si}||_\hh &= ||\Kn^{\frac{1}{2}} [ (\Kn^\dagger - (\Kn_{\Si})^\dagger) \y + (\mathbf{I} - \Kn^\dagger \Kn) (\vvec - \vvec_i) + (\Kn^\dagger \Kn - (\Kn_{\Si})^\dagger \Kn_{\Si})\vvec_i]|| \\
    &\leq ||\Kn^{\frac{1}{2}}||_{op} \left( ||(\Kn^\dagger - (\Kn_{\Si})^\dagger) \y||  + ||(\mathbf{I} - \Kn^\dagger \Kn) (\vvec - \vvec_i) || + ||\kvec \kvec^\dagger \vvec_i||  \right) \\
    &\leq ||\Kn^{\frac{1}{2}}||_{op}(B_0 + 2 ||\vvec - \vvec_i|| + ||\vvec_i||)
\end{split}
\end{equation}
We see that the right hand side is minimized when $\vvec = \vvec_i = \mathbf{0}$.
We have also computed $B_0 = ||\Kn^\dagger||_{op} \times cond(\Kn) \times ||\y||$, which concludes the proof of Lemma \ref{lemma:kernel_stability}.

\section{Remark and Related Work}\label{sec:discussion}

In the previous section we obtained bounds on the $\cv$ stability of interpolating
solutions to the kernel least squares problem. Our kernel least squares results can be compared
with stability bounds for regularized ERM (see
Remark~\ref{Tik}). Regularized ERM has a strong stability guarantee in
terms of a uniform stability bound  which turns out to be inversely proportional to the
regularization parameter $\lambda$ and the sample size $n$
\citep{BE:2001}. However, this estimate becomes vacuous as
$\lambda\to 0$.  In this paper, we establish a bound on average stability,
and show that this bound is minimized when the minimum norm ERM solution is
chosen. We study average stability since one can expect
worst case scenarios where the minimum norm is arbitrarily
large (when $n \approx d$). Our main finding is in fact the relationship between minimizing the norm of the
ERM solution and minimizing a bound on stability.

This leads to a second observation, namely, that we can consider the limit
of our risk bounds as both the sample size ($n$) and the dimensionality
of the data ($d$) go to infinity.  This is a classical setting in
statistics which allows us to use results from random matrix theory
\citep{MarchenkoPastur}.  In particular, for linear kernels the
behavior of the smallest eigenvalue of the kernel matrix (which appears in our bounds) can be
characterized in this asymptotic limit. Here the dimension
of the data coincides with the number of parameters in the
model. Interestingly, analogous results hold for more
general kernels \citep{2010arXiv1001.0492E} where the asymptotics are taken
with respect to the number  and dimensionality  of the data.
These results predict a double descent curve for the condition number as found in practice,
see Figure~\ref{CondNumberrbf}.

Recently, there has been a surge of interest in studying linear and kernel least squares
models, since classical results focus on situations where constraints
or penalties that prevent interpolation are added to the empirical risk.  For example,
  high dimensional linear regression is considered in
\cite{2019arXiv190805355M, 2019arXiv190308560H, bartlettbenign},
and ``ridgeless'' kernel least squares is studied in
\cite{2019arXiv190810292L,2018arXiv181211167R} and \cite{liang2020just}.
While these papers study upper and lower bounds on the risk of interpolating
solutions to the linear and kernel least squares problem, ours are the first to derived using stability arguments. While it might be possible to obtain tighter excess risk bounds through
careful analysis of the minimum norm interpolant, our simple approach helps us
establish a link between stability in  statistical and in  numerical sense.
%and the norm of interpolating solutions.

Finally, we can compare our results with observations made in \cite{DescentCondition}
on the condition number of random kernel matrices. The condition number of the empirical
kernel matrix is known to control the numerical stability of the solution to a kernel least squares
problem. Our results show that the statistical stability is also controlled by the condition number
of the kernel matrix, providing a natural link between numerical and statistical stability.

\begin{figure}%[tbhp]
	\centering
	\includegraphics[trim = 110 240 125 255, width=0.5\linewidth, clip]{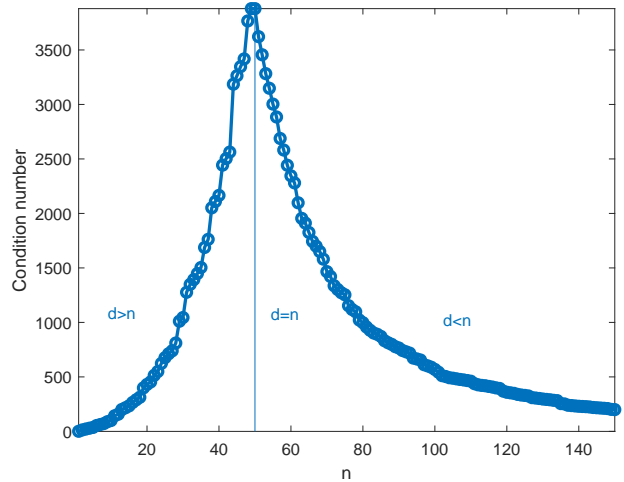}
	\caption{Typical double descent of the condition number (y
		axis) of a radial basis function kernel $K(x,x') = \exp\left(-\frac{||x-x'||^2}{2\sigma^2}\right)$ built from a random data matrix distributed as $\mathcal{N}(0,1)$: as in the linear case, the condition number is
		worse when $n=d$, better
		if $n>d$ (on the  right of $n=d$) and also better if $n<d$ (on the
		left of $n=d$). The parameter $\sigma$ was chosen to
                be 5. From \cite{DescentCondition}}
	\label{CondNumberrbf}
\end{figure}

\section{Conclusions} \label{sec:conclusions}
	\vskip0.1in

        In summary, minimizing a bound on cross validation stability
        minimizes the expected error in both the classical and the
        modern regime of ERM.  In the classical regime ($d<n$), $\cv$
        stability implies generalization and consistency for
        $n\to \infty$.  In the modern regime ($d>n$), as described in this
        paper, $\cv$ stability can account for the double descent
        curve in
        kernel interpolants (\citep{Belkin15849} showed empirical
        evidence for closely related interpolation methods) under appropriate distributional assumptions.
%        It has been claimed that stability can also explain why maximum margin solutions
%        in deep networks, trained under exponential-type loss
%        functions, minimize the expected error \cite{Foundations}.
        The main contribution of this paper is characterizing
        stability of interpolating solutions, in particular deriving
        excess risk bounds via a stability argument.  In the process,
        we show that among all the interpolating solutions, the one
        with minimal norm also minimizes a bound on stability. Since
        the excess risk bounds of the minimum norm interpolant depend
        on the pseudoinverse of the kernel matrix, we establish here
        a clear link between {\it numerical and statistical} stability.
        This also holds for solutions computed by gradient descent,
        since gradient descent converges to minimum norm solutions in
        the case of ``linear'' kernel methods.  Our approach is simple
        and combines basic stability results with matrix inequalities.

\newpage

\bibliographystyle{iclr2021_conference}

\bibliography{Boolean}
\normalsize

\newpage

\appendix
\section{Excess Risk, Generalization, and Stability} \label{app:ro_stable}

We use the same notation as introduced in Section \ref{sec:sl_erm_definitions} for the various quantities considered in this section.
That is in the supervised learning setup $V(f,z)$ is the loss incurred by hypothesis $f$ on the sample $z$, and $I[f] = \E_z [V(f,z)]$
is the expected error of hypothesis $f$. Since we are interested in different forms of stability, we will consider learning problems
over the original training set $S=\{z_1, z_2, \ldots, z_n \}$, the leave one out training set $\Si = \{z_1, \ldots, z_{i-1},z_{i+1}, \ldots, z_n \}$, and the replace one training set $(\Si,z) = \{z_1, \ldots, z_{i-1},z_{i+1}, \ldots, z_n, z \}$

\subsection{Replace one and leave one out algorithmic stability}
% In the following we strengthen the above stability definition introducing the notion of replace one algorithmic stability (in expectation)  \cite{BE:2001}, that is for

Similar to the definition of expected $\cv$ stability in equation \eqref{cvst} of the main paper, we say an algorithm
is cross validation {\em replace one} stable (in expectation), denoted as $\cvr$, if
there exists $\beta_{ro}>0$ such that
$$
 \E_{S,z}[V(f_S,z)- V(f_{(\Si, z)}, z)]\le \beta_{ro}.
$$

We can strengthen the above stability definition by introducing the notion of replace one
algorithmic stability (in expectation)  \cite{BE:2001}.
There exists $\alpha_{ro}>$ such that for all $i=1, \dots, n$,
$$
 \E_{S,z}[\nor{f_S-f_{(\Si, z)}}_\infty]\le \alpha_{ro}.
$$
We make two observations:\\
First, if the loss is Lipschitz, that is if  there exists $C_V>0$ such that for all $f,f'\in \hh$
$$
\nor{V(f,z)- V(f', z)}\le C_V\nor{f-f'},
$$
then replace one algorithmic stability implies $\cvr$ stability with $\beta_{ro}= C_V\alpha_{ro}$. Moreover, the same result
holds if the loss is locally Lipschitz and there exists $R>0$, such that  $\nor{f_S}_\infty \le R$ almost surely. In this latter case the Lipschitz constant will depend on $R$.
Later, we illustrate this situation for the square loss.

Second, we have for all $i=1, \dots, n$, $S$ and $z$,
$$
 \E_{S,z}[\nor{f_S-f_{(\Si,z)}}_\infty]\le  \E_{S,z}[\nor{f_S-f_{\Si}}_\infty]+ \E_{S,z}[\nor{f_{(\Si,z)}-f_{\Si}}_\infty].
$$
This observation motivates the notion of leave one out algorithmic stability (in expectation)  \cite{BE:2001}]
$$
 \E_{S,z}[\nor{f_S-f_{\Si}}_\infty]\le \alpha_{loo}.
$$
Clearly,  leave one out algorithmic stability implies replace one algorithmic stability with $\alpha_{ro}=2\alpha_{loo}$ and it implies also $\cvr$ stability with $\beta_{ro}= 2 C_V\alpha_{loo}$.

\subsection{Excess Risk and $\cv$, $\cvr$ Stability}

We recall the statement of Lemma \ref{expriskgen} in section \ref{sec:stability_err_bds} that bounds the excess risk using the $\cv$ stability of a solution.

\begin{lemma}[Excess Risk  \& $\cv$ Stability]\label{app:expriskgen}
For all $i=1, \dots, n$,
\begin{equation}\label{app:cvlooexcess}
\E_S[I[f_{\Si}]-\inf_{f\in \hh} I[f]]\le
%\E_S[I[f_{S}]-I_S[f_S]] =
\E_S[V(f_{\Si}, z_i)- V(f_S, z_i)].
\end{equation}
\end{lemma}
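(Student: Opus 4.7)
The plan is to prove the lemma by exploiting two symmetries of the i.i.d.\ sampling together with the defining property of ERM, assembling them into a single chain of (in)equalities.

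First I would note that $f_{\Si}$ depends only on the $(n-1)$ points $\{z_j : j \neq i\}$, and that $z_i$ is drawn i.i.d.\ from $\mu$ and is therefore independent of $f_{\Si}$. Conditioning on $\Si$ and using the definition $I[f] = \E_z V(f,z)$, this gives the ``validation identity''
\begin{equation*}
\E_S\!\left[V(f_{\Si}, z_i)\right] \;=\; \E_{\Si}\!\left[\E_{z_i}\!\left[V(f_{\Si}, z_i)\,\middle|\,\Si\right]\right] \;=\; \E_{\Si}\!\left[I[f_{\Si}]\right] \;=\; \E_S\!\left[I[f_{\Si}]\right].
\end{equation*}

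Next I would control $\E_S[V(f_S, z_i)]$ from above by $\inf_{f\in\hh} I[f]$. The plan here has two ingredients. By the symmetry of ERM in its training points and the i.i.d.\ sampling of $S$, the quantity $\E_S[V(f_S, z_j)]$ does not depend on $j$, so for any $i$
\begin{equation*}
\E_S\!\left[V(f_S, z_i)\right] \;=\; \E_S\!\left[\tfrac{1}{n}\sum_{j=1}^n V(f_S, z_j)\right] \;=\; \E_S\!\left[I_S[f_S]\right].
\end{equation*}
By the definition of ERM, for every fixed $f\in\hh$ we have $I_S[f_S] \le I_S[f]$ almost surely; taking expectations over $S$ and using $\E_S[I_S[f]] = I[f]$ (since $f$ is deterministic), this yields $\E_S[I_S[f_S]] \le I[f]$ for every $f\in\hh$. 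Taking the infimum over $f$ gives
\begin{equation*}
\E_S\!\left[V(f_S, z_i)\right] \;=\; \E_S\!\left[I_S[f_S]\right] \;\le\; \inf_{f\in\hh} I[f].
\end{equation*}

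Finally I would combine the two displays: subtracting and rearranging gives the desired bound
\begin{equation*}
\E_S\!\left[I[f_{\Si}]\right] - \inf_{f\in\hh} I[f] \;\le\; \E_S\!\left[V(f_{\Si}, z_i)\right] - \E_S\!\left[V(f_S, z_i)\right] \;=\; \E_S\!\left[V(f_{\Si}, z_i) - V(f_S, z_i)\right].
\end{equation*}
The only delicate point is that the paper does not assume the infimum $\inf_{f\in\hh} I[f]$ is attained, so strictly speaking one should instead pick an $\varepsilon$-minimizer $f_\varepsilon\in\hh$ with $I[f_\varepsilon]\le \inf_f I[f] + \varepsilon$, run the ERM inequality with $f_\varepsilon$ in place of $f^*$, and send $\varepsilon\to 0$. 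This is the one subtlety I would be careful about; the rest is just the two symmetry arguments above.
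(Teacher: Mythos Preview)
Your proposal is correct and follows essentially the same route as the paper: the paper also combines the validation identity $\E_S[I[f_{\Si}]]=\E_S[V(f_{\Si},z_i)]$, the symmetry identity $\E_S[I_S[f_S]]=\E_S[V(f_S,z_i)]$, and the ERM bias inequality $\E_S[I_S[f_S]]\le \inf_{f\in\hh} I[f]$, just assembled in a slightly different order (add and subtract $\E_S[I_S[f_S]]$ first, then apply the identities). Your remark about handling the non-attained infimum via an $\varepsilon$-minimizer is a nice bit of care that the paper glosses over.
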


In this section,  two properties of ERM are useful, namely symmetry, and a form of unbiasedeness.
\paragraph{Symmetry.}  A key property of ERM  is that it is {\em symmetric} with respect
to the data set  $S$, meaning that it does not depend on the order of the data in $S$.

%This means in particular that if we let
%$$
%\Si = \{z_1, \dots, z_{i-1}, z_{i+1}, \dots, z_n \}.
%$$
%be a training set where the $i$-th point is left out and $f_{\Si}$ the corresponding ERM solution, then for all $i,j=1, \dots, n$
%$$
%f_{\Si}=f_{S_j}.
%$$
%Similarly if
%$$
%\Siz = \{z_1, \dots, z_{i-1},z, z_{i+1}, \dots, z_n \}.
%$$
%is a training set where the $i$-th point is replaced by a new point $z$  and $f_{\Siz}$ the corresponding ERM solution, than for all $i,j=1, \dots, n$
%$$
%f_{\Siz}=f_{S_{j,z}}.
%$$

A second property relates the expected ERM with the minimum of expected risk.
\paragraph{ERM Bias.}
The following inequality holds.
\begin{equation}\label{negbias}
\E[[I_S[f_S]] -  \min_{f\in \hh} I[f] \le 0.
\end{equation}
%
%\begin{equation}\label{negbias}
%\E_S[I_S[f_S]]\le \min_{f\in \hh} I[f].
%\end{equation}
To see this, note that
$$
I_S[f_S]\le I_S[f]
$$
for all $f\in\hh$ by definition of ERM, so that
 taking the expectation of both sides
$$
\E_S[I_S[f_S]]\le \E_S[I_S[f]]=I[f]
$$
for all $f\in \hh$. This implies
$$
\E_S[I_S[f_S]]\le\min_{f\in \hh}I[f]
$$
and hence~\eqref{negbias} holds.
\begin{remark}
%[Almost minimizers]
Note that the same argument gives more generally that
\begin{equation}\label{unbias}
\E[\inf_{f\in \hh}[I_S[f]] -  \inf_{f\in \hh} I[f] \le 0.
\end{equation}
\end{remark}

%
%\subsection{Excess Risk  and  Cross Validation Leave One Out Stability}

Given the above premise, the proof of Lemma \ref{expriskgen} is simple.
\begin{proof}[of  Lemma \ref{expriskgen}]
Adding and subtracting $\E_S[I_S[f_S]]$ from the expected excess risk we have that
\begin{equation}\label{cvloo1}
\E_S[I[f_{\Si}]- \min_{f\in \hh} I[f]]= \E_S[I[f_{\Si}]-  I_S[f_S]+ I_S[f_S]- \min_{f\in \hh} I[f]],
\end{equation}
and since  $\E_S[I_S[f_S]]- \min_{f\in \hh} I[f]]$ is less or equal than zero, see~\eqref{unbias}, then
\begin{equation}\label{cvloo2}
\E_S[I[f_{\Si}]- \min_{f\in \hh} I[f]]\le \E_S[I[f_{\Si}]-  I_S[f_S]].
\end{equation}
Moreover,   for all $i=1, \dots, n$
$$
 \E_S[I[f_{\Si}]]= \E_S[\E_{z_i}V(f_{\Si}, z_i)] =  \E_S[V(f_{\Si}, z_i)]
$$
and
%by symmetry of ERM $f_{S^i}=f_{S^j}$ for all $i,j=1, \dots, n$ so that
$$
 \E_S[I_S[f_S]]= \frac 1 n \sum_{i=1}^n \E_S[V(f_S, z_i)]=\E_S[V(f_S, z_i)].
$$
Plugging these last two expressions in~\eqref{cvloo2} and in~\eqref{cvloo1} leads to~\eqref{cvlooexcess}.
\end{proof}

We can prove a similar result relating excess risk with $\cvr$ stability.
\begin{lemma}[Excess Risk \& $\cvr$ Stability]
Given the above definitions, the following inequality holds for all $i=1, \dots, n$,
\begin{equation}
\E_S[I[f_{S}]-\inf_{f\in \hh} I[f]]\le
\E_S[I[f_{S}]-I_S[f_S]] = \E_{S,z}[V(f_S, z)- V(f_{(\Si,z)}, z)].
\end{equation}
\end{lemma}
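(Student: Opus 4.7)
The plan is to prove this lemma by the same two-step strategy used in Lemma A.1 (the $\cv$ version): first bound the excess risk by a generalization gap using the ERM bias inequality, then rewrite the generalization gap as a replace-one stability quantity using the fact that all samples are i.i.d. and ERM is symmetric in its inputs.

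First, for the inequality $\E_S[I[f_S] - \inf_{f\in\hh} I[f]] \le \E_S[I[f_S] - I_S[f_S]]$, I would add and subtract $\E_S[I_S[f_S]]$ inside the expectation of the excess risk, exactly as in the proof of Lemma~\ref{app:expriskgen}. The ERM bias inequality~\eqref{negbias} (or more generally~\eqref{unbias}) then shows that $\E_S[I_S[f_S]] - \inf_{f\in\hh} I[f] \le 0$, so that term can be dropped to yield the desired upper bound by the expected generalization gap.

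Second, for the equality $\E_S[I[f_S] - I_S[f_S]] = \E_{S,z}[V(f_S,z) - V(f_{(\Si,z)}, z)]$, I would handle each term separately. The term $\E_S[I[f_S]]$ equals $\E_{S,z}[V(f_S,z)]$ simply by the definition $I[f] = \E_z V(f,z)$ and Fubini, since the fresh sample $z$ is independent of $S$. For the term $\E_S[I_S[f_S]] = \frac{1}{n}\sum_{i=1}^n \E_S[V(f_S, z_i)]$, the key observation is that the pair $(S, z)$ consists of $n+1$ i.i.d. draws from $\mu$, and by symmetry of ERM in its training set the joint distribution of $(f_S, z_i)$ (for any fixed $i$) is the same as the joint distribution of $(f_{(\Si, z)}, z)$: relabeling the $i$-th coordinate of $S$ as the "held-out" sample and the extra $z$ as the $i$-th training point just permutes $n+1$ exchangeable samples. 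Thus $\E_S[V(f_S, z_i)] = \E_{S,z}[V(f_{(\Si,z)}, z)]$, and averaging over $i$ gives $\E_S[I_S[f_S]] = \E_{S,z}[V(f_{(\Si,z)}, z)]$. Subtracting yields the stated identity.

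The main (mild) obstacle is being careful about the symmetry/exchangeability argument in step two: one has to spell out that $(z_1,\ldots,z_n,z)$ are exchangeable and that ERM's symmetry in its input list means relabeling which sample is "held out" versus "in the training set" does not change the expectation. Once that is stated, everything else reduces to linearity of expectation and the ERM bias inequality already proved in this appendix, so no new technical machinery is required beyond what has been developed for Lemma~\ref{app:expriskgen}.
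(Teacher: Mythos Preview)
Your proposal is correct and follows essentially the same approach as the paper: bound the excess risk by the generalization gap via the ERM bias inequality~\eqref{unbias}, then rewrite the gap using $\E_S[I[f_S]]=\E_{S,z}[V(f_S,z)]$ together with the symmetry/exchangeability identity $\E_S[V(f_S,z_i)]=\E_{S,z}[V(f_{(\Si,z)},z)]$. The paper's proof is terser about the exchangeability step (it simply invokes ``symmetry of the algorithm''), whereas you spell out the relabeling argument more explicitly, but the two arguments are the same in substance.
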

% The proof is short but clever\footnote{Or at least I get confused all the time...} and  we report it for completeness.
\begin{proof}
The first inequality is clear from adding and subtracting $I_S[f_S]$ from the expected risk $I[f_S]$ we have that
$$
\E_S[I[f_S]- \min_{f\in \hh} I[f]]= \E_S[I[f_S]-  I_S[f_S]+ I_S[f_S]- \min_{f\in \hh} I[f]], $$
and recalling~\eqref{unbias}. The main step in the proof is showing that for all $i=1,  \dots, n$,
\begin{equation}\label{clever}
\E[I_S[f_S]]= \E[V(f_{(\Si,z)},z)]
\end{equation}
to be compared with the trivial equality, $\E[I_S[f_S]= \E[V(f_{S},z_i)]$.
To prove Equation~\eqref{clever}, we have for all $i=1, \dots, n$,
$$
\E_S[I_S[f_S]]= \E_{S,z}[\frac 1 n \sum_{i=1}^nV(f_S, z_i)]=  \frac 1 n\sum_{i=1}^n  \E_{S,z} [V(f_{(\Si,z)}, z)]= \E_{S,z} [V(f_{(\Si,z)},z)]
$$
where we used the fact that by the symmetry of the algorithm $\E_{S,z} [V(f_{(\Si,z)}, z)]$ is the same for all $i=1, \dots, n$. The proof is concluded noting that $\E_S[I[f_S]]= \E_{S,z}[V(f_S,z)]$.
\end{proof}

\subsection{Discussion on Stability and Generalization}

Below we
discuss some more aspects of stability and its connection to other quantities in statistical
learning theory.

\begin{remark}[$\cv$ stability in expectation and in probability]
In \cite{Mukherjee2006}, $\cv$ stability  is defined in probability, that is
there exists $\beta^P_{CV} >0$,  $0<\delta^P_{CV}\le 1$ such that
 $$
 \PP_S\{ |V(f_{\Si}, z_i)- V(f_S, z_i)| \ge \beta^P_{CV}\}\le \delta^P_{CV}.
 $$
 Note that the absolute value is not needed for ERM since almost positivity
 holds \cite{Mukherjee2006}, that is
 $
 V(f_{\Si}, z_i)- V(f_S, z_i)>0.
 $
Then  $\cv$ stability in probability and in expectation  are clearly related
and indeed equivalent for bounded loss functions. $\cv$  stability in
expectation~\eqref{cvst} is what we study in the following sections.
\end{remark}

\begin{remark}[Connection to  uniform stability and other notions of stability]
  Uniform stability, introduced by \cite{BE:2001}, corresponds in our
  notation to the assumption that there exists $\beta_u>0$ such that
  for all $i=1, \dots, n$,
  $ \sup_{z\in Z} |V(f_{\Si}, z)- V(f_S,z)|\le \beta_u.  $ Clearly
  this is a strong notion implying most other definitions of
  stability.  We note that there are number of different notions of
  stability. We refer the interested reader to \cite{kn2002} ,
  \cite{Mukherjee2006}.
\end{remark}

\begin{remark}[$\cv$ Stability \& Learnability]
  A natural question is to which extent suitable notions of stability
  are not only sufficient but also necessary for controlling the
  excess risk of ERM.  Classically, the latter is characterized in
  terms of a uniform version of the law of large numbers, which itself
  can be characterized in terms of suitable complexity measures
  of the hypothesis class.
  Uniform stability is too strong to characterize consistency
    while $\cv$ stability turns out to provide a suitably weak definition as
    shown in \cite{Mukherjee2006}, see also \cite{kn2002},
    \cite{Mukherjee2006}.  Indeed, a main result in
    \cite{Mukherjee2006} shows that $\cv$ stability is {\it equivalent
    to consistency of ERM}:
\end{remark}

\begin{theorem}\cite{Mukherjee2006} \label{keytp}
For ERM and bounded loss functions, $\cv$
  stability in probability with $\beta^P_{CV}$
  converging to zero for $n \to \infty$ is equivalent to  consistency
 and generalization of ERM.
\end{theorem}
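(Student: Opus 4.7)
\textbf{Proof proposal for Theorem \ref{keytp}.}

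The plan is to establish the two implications separately. The central ingredients are (i) Lemma \ref{app:expriskgen}, which bounds the excess risk of $f_{\Si}$ by its $\cv$ stability; (ii) the symmetry identity $\E_S[V(f_S, z_i)] = \E_S[I_S[f_S]]$ valid for symmetric algorithms such as ERM; and (iii) the identity $\E_S[V(f_{\Si}, z_i)] = \E_S[I[f_{\Si}]]$, which holds because $z_i$ is independent of $\Si$. Since the loss is bounded by some constant $M$, I would freely pass between stability in probability and stability in expectation via the standard bound $\E_S|V(f_{\Si}, z_i) - V(f_S, z_i)| \le \beta^P_{CV} + 2M\,\delta^P_{CV}$, so the qualitative statement reduces to a statement about expectations.

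\textbf{Forward direction.} Assume $\cv$ stability in probability with both parameters tending to zero. Converting to stability in expectation and invoking Lemma \ref{app:expriskgen} yields $\E_S[I[f_{\Si}] - \inf_{f \in \hh} I[f]] \to 0$, i.e.\ consistency of the leave-one-out ERM. To promote this to consistency of $f_S$ itself, I would decompose $I[f_S] - \inf I = (I[f_S] - I[f_{\Si}]) + (I[f_{\Si}] - \inf I)$ and control the first summand by a second application of $\cv$ stability combined with symmetry of ERM. For generalization, I would use the identity $\E_S[I[f_S] - I_S[f_S]] = \E_{S,z}[V(f_S, z) - V(f_{(\Si,z)}, z)]$ established in the appendix, then invoke the fact noted in the excerpt that $\cv$ stability entails $\cvr$ stability through the triangle inequality $\nor{f_S - f_{(\Si,z)}} \le \nor{f_S - f_{\Si}} + \nor{f_{\Si} - f_{(\Si,z)}}$.

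\textbf{Reverse direction.} Assume ERM is consistent and generalizes. Using identities (ii) and (iii) above, the expected $\cv$ stability quantity telescopes as
\begin{equation*}
\E_S[V(f_{\Si}, z_i) - V(f_S, z_i)] \;=\; \E_S[I[f_{\Si}] - I[f_S]] \;+\; \E_S[I[f_S] - I_S[f_S]].
\end{equation*}
The second term vanishes by generalization. For the first, both $f_S$ and $f_{\Si}$ are ERM solutions on datasets of sizes $n$ and $n-1$ respectively, so consistency applied to each shows $\E_S[I[f_S]]$ and $\E_S[I[f_{\Si}]]$ both converge to $\inf_{f\in\hh} I[f]$, making their difference vanish. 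Stability in expectation then converts to stability in probability via Markov's inequality and boundedness of the loss.

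\textbf{Main obstacle.} The subtlest point is the asymmetry between $f_S$ and $f_{\Si}$: in the forward direction we must transfer consistency from $f_{\Si}$ to $f_S$, and in the reverse direction we must appeal to consistency of ERM on $n-1$ samples rather than $n$. Both issues are plausible because perturbing the empirical distribution by one sample is an $O(1/n)$ change in total variation, but converting this intuition into rates compatible with $\beta^P_{CV} \to 0$ requires a uniform-in-$n$ argument (of the type carried out in \cite{Mukherjee2006}), and this is where most of the technical work lives.
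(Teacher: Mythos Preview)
The paper does not prove Theorem \ref{keytp}. It is stated as a cited result from \cite{Mukherjee2006} and appears inside a remark on learnability; no argument for it is given anywhere in the main text or the appendix. Consequently there is no ``paper's own proof'' to compare your proposal against.

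That said, one step in your forward direction deserves scrutiny independently of any comparison. To obtain generalization you appeal to the triangle inequality $\nor{f_S - f_{(\Si,z)}} \le \nor{f_S - f_{\Si}} + \nor{f_{\Si} - f_{(\Si,z)}}$, but this is a statement about \emph{algorithmic} stability (control of $\nor{\cdot}_\infty$ differences), whereas the hypothesis you are given is $\cv$ stability in probability, which only controls the loss difference $V(f_{\Si}, z_i) - V(f_S, z_i)$ at the held-out point. These are not the same notion, and $\cv$ stability does not in general imply the norm bound you need for that triangle inequality. The paper itself flags this gap in the remark on ``$\cv$ stability and generalization'': for the decomposition $\E_S[I[f_S] - I_S[f_S]] = \E_S[I[f_S] - I_S[f_{\Si}]] + \E_S[I_S[f_{\Si}] - I_S[f_S]]$, only the second term is controlled by $\cv$ stability directly, while the first (the expected leave-one-out error) requires a separate argument specific to ERM --- and the paper simply points back to Theorem \ref{keytp} (i.e.\ to \cite{Mukherjee2006}) for that. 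So the real work in the forward direction is precisely what you defer to \cite{Mukherjee2006} in your closing paragraph; your sketch does not supply an independent route around it.
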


\begin{remark}[$\cv$ stability \& in-sample/out-of-sample error]
Let $(S,z)=  \{z_1,  \dots, z_n , z\},$ ($z$ is a data point drawn according to the same distribution) and the corresponding ERM solution $f_{(S,z)}$,
then~\eqref{cvlooexcess}  can be equivalently written as,
$$
\E_S[I[f_{S}]-\inf_{f\in \cal F} I[f]]\le \E_{S,z}[V(f_S, z)- V(f_{(S,z)}, z)].
$$

Thus $\cv$ stability measures how much the loss changes when we test on a point
that is present in the training set and absent from it. In this view,  it can be seen
as an average measure of the difference between in-sample and out-of-sample error.
\end{remark}

\begin{remark}[$\cv$ stability and generalization]
A common error measure is the (expected) generalization gap
$
\E_S[I[f_S]-  I_S[f_S]].
$
For non-ERM algorithms, $\cv$ stability by itself not sufficient to
control this term, and further conditions are needed
\cite{Mukherjee2006}, since
$$
\E_S[I[f_S]-  I_S[f_S]]= \E_S[I[f_S]-  I_S[f_{\Si}] ] +\E_S[ I_S[f_{\Si}]- I_S[f_S]].
$$
The second term becomes for all $i=1, \dots, n$,
$$
\E_S[ I_S[f_{\Si}]- I_S[f_S]]= \frac 1 n \sum_{i=1}^n \E_S[V(f_{\Si}, z_i) -V(f_{S}, z_i) ] =  \E_S[V(f_{\Si}, z_i) -V(f_{S}, z_i) ]
$$
and hence is controlled by CV stability. The first term is called
expected leave one out error in \cite{Mukherjee2006} and is
controlled in ERM as $n\to \infty$, see  Theorem~\ref{keytp} above.
\end{remark}

\section{$\cv$ Stabililty of Linear Regression} \label{app:linreg_stability}

%\subsection{Leave One Out Stability}

We have a dataset $S=\{(\x_i, y_i)\}_{i=1}^n$ and we want to find a mapping $\w \in \R^d$,
that minimizes the empirical least squares risk. All interpolating solutions are of the form
$\hat{\w}_S = \y^\top \Xn^\dagger +\vvec^\top (\mathbf{I} - \Xn \Xn^\dagger )$. Similarly, all interpolating
solutions on the leave one out dataset $\Si$ can be written as $\hat{\w}_{\Si} = \y_i^\top (\Xn_i)^\dagger
+ \vvec_i^\top (\mathbf{I} - \Xn_i(\Xn_i)^\dagger)$.
Here $\Xn, \Xn_i \in \R^{d\times n}$ are the data matrices for the original and leave one out datasets respectively.
We note that when $\vvec = \mathbf{0}$ and $\vvec_i = \mathbf{0}$, we obtain the minimum norm
interpolating solutions on the datasets $S$ and $\Si$.

In this section we want to estimate the $\cv$ stability of the minimum norm solution to
the ERM problem in the linear regression case. This is the case outlined in Remark \ref{rem:linreg} of the main paper.
In order to prove Remark \ref{rem:linreg}, we only need to combine Lemma \ref{ls_lip} with the linear regression analogue
of Lemma \ref{lemma:kernel_stability}. We state and prove that result in this section. This result predicts a double descent
curve for the norm of the pseudoinverse as found in practice, see Figure \ref{fig:pinv_norm}.

\begin{lemma} \label{lemma:linear_stability}
Let $\hat{\w}_S, \hat{\w}_{\Si}$ be any interpolating least squares solutions on
the full and leave one out datasets $S, \Si$,
then $\nor{\hat{\w}_S - \hat{\w}_{\Si} } \leq B_{CV}(\Xn^\dagger, \y, \vvec, \vvec_i)$, and
$B_{CV}$ is minimized when $\vvec=\vvec_i =\mathbf{0}$, which corresponds to the minimum
norm interpolating solutions $\w_S^\dagger, \w_{\Si}^\dagger$.

Also,
\begin{equation}\label{eqn:linear_perturbation_norm}
\nor{\w_S^\dagger - \w_{\Si}^\dagger } \leq \nor{\Xn^\dagger}_{op} \nor{\y}
\end{equation}
\end{lemma}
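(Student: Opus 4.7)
The plan is to mirror the structure of the proof of Lemma~\ref{lemma:kernel_stability}, exploiting the fact that here we work in $\R^d$ with the standard Euclidean norm rather than in an RKHS. First I would expand $\hat{\w}_S - \hat{\w}_{\Si}$ using the given parameterizations of interpolating solutions. Noting that $\y_i = \y - y_i \mathbf{e}_i$ and that the $i$-th column of $\Xn_i$ is zero, so $\mathbf{e}_i$ lies in $\ker(\Xn_i)$ and is therefore orthogonal to the image of $(\Xn_i^\dagger)^\top$, one sees that $\mathbf{e}_i^\top \Xn_i^\dagger = \mathbf{0}$ and hence $\y_i^\top \Xn_i^\dagger = \y^\top \Xn_i^\dagger$. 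This reduces the $\y$-part of the difference to $\y^\top(\Xn^\dagger - \Xn_i^\dagger)$, while the $\vvec$-part rearranges as $(\vvec-\vvec_i)^\top(\mathbf{I} - \Xn\Xn^\dagger) + \vvec_i^\top(\Xn_i\Xn_i^\dagger - \Xn\Xn^\dagger)$.

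Second, applying the triangle inequality and using that $\mathbf{I} - \Xn\Xn^\dagger$, $\mathbf{I} - \Xn_i\Xn_i^\dagger$ and $\Xn\Xn^\dagger - \Xn_i\Xn_i^\dagger$ are all orthogonal projectors, and hence have operator norm at most $1$, I would obtain a bound of the form
\[
\|\hat{\w}_S - \hat{\w}_{\Si}\| \leq B_0 + \|\vvec - \vvec_i\| + \|\vvec_i\|,
\]
where $B_0 = \|\y^\top(\Xn^\dagger - \Xn_i^\dagger)\|$ depends only on $\Xn$ and $\y$. This upper bound is visibly minimized at $\vvec = \vvec_i = \mathbf{0}$, which recovers the minimum-norm interpolants $\w_S^\dagger, \w_{\Si}^\dagger$ and settles the first half of the lemma.

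Third, to derive~\eqref{eqn:linear_perturbation_norm}, I would apply Meyer's rank-$1$ update formula for the pseudoinverse to $\Xn_i = \Xn - \x_i \mathbf{e}_i^\top$. Since $\Xn$ has full column rank, $\Xn^\dagger \Xn = \mathbf{I}$, and consequently $\mathbf{e}_i^\top \Xn^\dagger(-\x_i) = -1$, placing us squarely in Meyer's rank-reducing case. After simplification, the formula yields the explicit rank-one expression $\Xn^\dagger - \Xn_i^\dagger = (M_{ii})^{-1}(M\mathbf{e}_i)(\mathbf{e}_i^\top \Xn^\dagger)$, with $M = \Xn^\dagger(\Xn^\dagger)^\top = (\Xn^\top \Xn)^{-1}$. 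Its operator norm equals $\|M\mathbf{e}_i\|/\sqrt{M_{ii}}$, and substituting $v = M^{1/2}\mathbf{e}_i$ into the variational characterization of $\|M\|_{op}$ gives $\|M\mathbf{e}_i\|^2/M_{ii} = v^\top M v/\|v\|^2 \leq \|M\|_{op} = \|\Xn^\dagger\|_{op}^2$. Hence $\|\Xn^\dagger - \Xn_i^\dagger\|_{op} \leq \|\Xn^\dagger\|_{op}$, and \eqref{eqn:linear_perturbation_norm} follows from $B_0 \leq \|\Xn^\dagger - \Xn_i^\dagger\|_{op}\,\|\y\|$.

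The main obstacle is the algebraic bookkeeping in Meyer's formula in the rank-reducing case: one has to carefully identify $k = \Xn^\dagger u$ and $h = v^\top \Xn^\dagger$ (with $u = -\x_i$, $v = \mathbf{e}_i$) and cancel the three correction terms down to a single rank-one piece. Once that explicit expression is in hand, the operator-norm bound reduces to a one-line Rayleigh-quotient argument and the remaining steps are routine matrix-norm manipulations.
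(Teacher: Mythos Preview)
Your proposal is correct and follows essentially the same route as the paper: the same decomposition of $\hat{\w}_S-\hat{\w}_{\Si}$, the same use of $\mathbf{e}_i^\top\Xn_i^\dagger=\mathbf{0}$, and the same application of Meyer's Theorem~6 in the rank-reducing case $\beta=0$. The only notable difference is that, after Meyer's formula collapses to $\Xn^\dagger-\Xn_i^\dagger=\Xn^\dagger\hvec^\dagger\hvec$ with $\hvec=\mathbf{e}_i^\top\Xn^\dagger$, the paper bounds the operator norm in one step via $\|\Xn^\dagger\|_{op}\,\|\hvec^\dagger\hvec\|_{op}\le\|\Xn^\dagger\|_{op}$ (since $\hvec^\dagger\hvec$ is a rank-one projector), which is a shorter alternative to your Rayleigh-quotient argument.
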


As mentioned before in section \ref{subsec:KLS_min_norm} of the main paper, linear regression can be viewed as
a case of the kernel regression problem where $\hh = \R^d$, and the feature map $\Phi$ is the identity map.
The inner product and norms considered in this case are also the usual Euclidean
inner product and $2$-norm for vectors in $\R^d$.  The notation $\nor{\cdot}$
denotes the Euclidean norm for vectors both in $\R^d$ and $\R^n$.
The usage of the norm should be clear from the context.
Also, $\nor{\mathbf{A}}_{op}$ is the left operator norm
for a matrix $\mathbf{A} \in \R^{n\times d}$, that is $\nor{\mathbf{A}}_{op}
= \sup_{\y \in \R^n, ||\y||=1} ||\y^\top \mathbf{A}||$.

We have $n$ samples in the training set for a linear regression problem,
$\{(\x_i,y_i)\}_{i=1}^n$. We collect all the samples into a single matrix/vector
$\Xn=[ \x_1 \x_2 \ldots \x_n] \in \R^{d\times n}$, and $\y=[y_1 y_2 \ldots y_n]^\top \in \R^n$.
Then any interpolating ERM solution $\w_S$ satisfies the linear equation
\begin{equation}
\w_S^\top \Xn=\y^\top
\end{equation}
Any interpolating solution can be written as:
\begin{equation}
(\hat{\w}_S)^\top= \y^\top \Xn^\dagger + \vvec^\top (\mathbf{I} - \Xn \Xn^\dagger).
\end{equation}

\begin{figure}%[tbhp]
	\centering
	\includegraphics[scale=0.6]{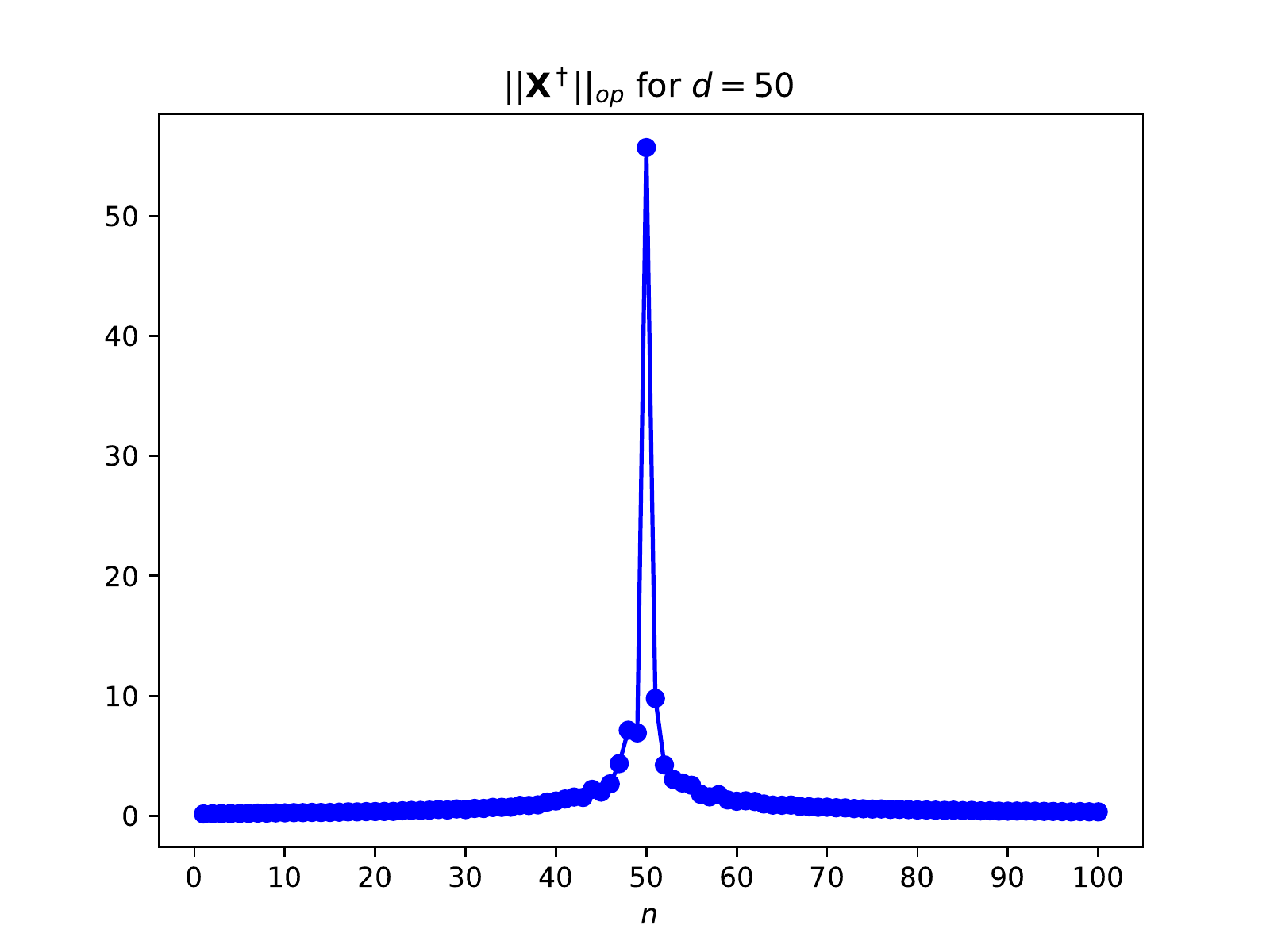}
	\caption{Typical double descent of the pseudoinverse norm (y
          axis) of a random data matrix distributed as $\mathcal{N}(0,1)$: the condition number is
          worse when $n=d$, better
          if $n>d$ (on the  right of $n=d$) and also better if $n<d$ (on the
          left of $n=d$).. From \cite{DescentCondition}}
	\label{fig:pinv_norm}
\end{figure}

\noindent If we consider the leave one out training set $\Si$
we can find the minimum norm ERM solution for $\Xn_i = [ \x_1 \ldots \mathbf{0}
\ldots \x_n ]$ and $\y_i = [y_1 \ldots 0 \ldots y_n]^\top$ as
\begin{equation}
(\hat{\w}_{\Si})^\top = \y_i^\top (\Xn_i)^\dagger + \vvec_i^\top (\mathbf{I} - \Xn_i (\Xn_i)^\dagger) .
\end{equation}
We can write $\Xn_i$ as:
\begin{equation}
\Xn_i= \Xn+\avec \bvec^\top
\end{equation}
where $\avec \in \R^d$ is a column
vector representing the additive change to the
$i^{\textrm{th}}$ column, i.e, $\avec= - \x_i$,
and $\bvec \in \R^{n \times 1}$ is the $i-$th element of the canonical basis in $\R^n$ (all the coefficients are zero but the $i-$th which is one).
% one-hot column vector
%with all zeros apart from the $i^{\textrm{th}}$ component
%which is $1$. T
Thus $\avec\bvec^\top$ is a $d \times n$ matrix composed of
all zeros apart from the $i^{\textrm{th}}$ column which is equal to $\avec$.

We also have $\y_i = \y - y_i \bvec$. Now per Lemma \ref{ls_lip} we are interested in bounding the quantity
$||\hat{\w}_{\Si} -\hat{\w}_S|| = ||(\hat{\w}_{\Si})^\top - (\hat{\w}_S)^\top ||$. This simplifies to:

\begin{equation} \label{eq:lin_reg_norm_diff}
\begin{split}
    ||\hat{\w}_{\Si} -\hat{\w}_S|| &= || \y_i^\top (\Xn_i)^\dagger - \y^\top \Xn^\dagger + \vvec_i^\top - \vvec^\top + \vvec^\top \Xn \Xn^\dagger - \vvec_i^\top \Xn_i (\Xn_i)^\dagger || \\
    &= ||(\y^\top - y_i \bvec^\top ) (\Xn_i)^\dagger - \y^\top \Xn^\dagger + \vvec_i^\top - \vvec^\top + \vvec^\top \Xn \Xn^\dagger - \vvec_i^\top \Xn_i (\Xn_i)^\dagger|| \\
    &= ||\y^\top ((\Xn_i)^\dagger - \Xn^\dagger) + y_i \bvec^\top (\Xn_i)^\dagger + \vvec_i^\top - \vvec^\top + \vvec^\top \Xn \Xn^\dagger - \vvec_i^\top \Xn_i (\Xn_i)^\dagger|| \\
    &= ||\y^\top ((\Xn_i)^\dagger - \Xn^\dagger) + \vvec_i^\top - \vvec^\top + \vvec^\top \Xn \Xn^\dagger - \vvec_i^\top \Xn_i (\Xn_i)^\dagger|| \\
    &= ||\y^\top ((\Xn_i)^\dagger - \Xn^\dagger) + (\vvec_i^\top - \vvec^\top )(\mathbf{I} - \Xn \Xn^\dagger) - \vvec_i^\top (\Xn \Xn^\dagger - \Xn_i (\Xn_i)^\dagger)||
\end{split}
\end{equation}

%\noindent Where $\nor{\mathbf{A}}_{op}$ is the left operator norm
%for a matrix $\mathbf{A} \in \R^{n\times d}$, that is $\nor{\mathbf{A}}_{op}
%= \sup_{\y \in \R^n, ||\y||=1} ||\y^\top \mathbf{A}||$.
In the above equation we make use of the fact that $\bvec^\top (\Xn_i)^\dagger  = \mathbf{0}$.
%We can thus get a bound on the $\cv$ stability
%of the minimum norm interpolating linear regression if we have a bound
%on $||(\Xn_i)^\dagger - \Xn^\dagger||_{op}$
\noindent We use an old formula \citep{Meyer1973, baksalary2003revisitation}
to compute $(\Xn_i)^\dagger$ from $\Xn^\dagger$. We use the development of pseudo-inverses
of perturbed matrices in \cite{Meyer1973}. We see that
$\avec = - \x_i$ is a vector in the column space of $\Xn$ and
$\bvec$ is in the range space of $\Xn^T$ (provided $\Xn$ has full column rank), with $\beta = 1+ \bvec^\top \Xn^\dagger \avec =
1-\bvec^\top \Xn^\dagger \x_i = 0$. This means
we can use Theorem 6 in \cite{Meyer1973} (equivalent to formula 2.1 in \cite{baksalary2003revisitation})
to obtain the expression for $(\Xn_i)^\dagger$

\begin{equation}
    (\Xn_i)^\dagger = \Xn^\dagger - \kvec\kvec^\dagger \Xn^\dagger - \Xn^\dagger \hvec^\dagger \hvec + (\kvec^\dagger \Xn^\dagger \hvec^\dagger ) \kvec \hvec
\end{equation}

where $\kvec = \Xn^\dagger \avec$, and $\hvec = \bvec^\top \Xn^\dagger$, and $\uvec^\dagger = \frac{\uvec^\top}{||\uvec||^2}$ for any non-zero vector $\uvec$.

\begin{equation}
    \begin{split}
        (\Xn_i)^\dagger - \Xn^\dagger &= (\kvec^\dagger \Xn^\dagger \hvec^\dagger ) \kvec \hvec - \kvec \kvec^\dagger \Xn^\dagger - \Xn^\dagger \hvec^\dagger \hvec \\
        &= (\bvec^\top \Xn^\dagger \hvec^\dagger) \bvec \hvec - \bvec \hvec - \Xn^\dagger \hvec^\dagger \hvec \\
        \implies ||(\Xn_i)^\dagger - \Xn^\dagger||_{op} &= ||\Xn^\dagger \hvec^\dagger \hvec||_{op} \\
        &\leq ||\Xn^\dagger||_{op}
    \end{split}
\end{equation}

\noindent The above set of inequalities follows from the fact
that the operator norm of a rank 1 matrix is given by $||\uvec \vvec^\top||_{op} = ||\uvec||\times ||\vvec||$, and by noticing that $\kvec = - \bvec$.

\noindent Also, from List 2 of \cite{baksalary2003revisitation} we have
that $\Xn_i (\Xn_i)^\dagger = \Xn \Xn^\dagger - \hvec^\dagger \hvec$.

Plugging in these calculations into equation \ref{eq:lin_reg_norm_diff} we get:
\begin{equation}
\begin{split}
    ||\hat{\w}_{\Si} -\hat{\w}_S|| &= ||\y^\top ((\Xn_i)^\dagger - \Xn^\dagger) + (\vvec_i^\top - \vvec^\top )(\mathbf{I} - \Xn \Xn^\dagger) - \vvec_i^\top (\Xn \Xn^\dagger - \Xn_i (\Xn_i)^\dagger)|| \\
    &\leq B_0 + ||\mathbf{I} - \Xn\Xn^\dagger ||_{op} ||\vvec - \vvec_i|| + ||\vvec_i|| \times ||\hvec^\dagger \hvec||_{op} \\
    & \leq  B_0 + 2 ||\vvec - \vvec_i|| + ||\vvec_i||
\end{split}
\end{equation}
We see that the right hand side is minimized when $\vvec = \vvec_i = \mathbf{0}$. We can also compute $B_0 = ||\Xn^\dagger||_{op} ||\y||$, which concludes the proof of Lemma \ref{lemma:linear_stability}.

%\bibliographystyle{unsrt}
%
%\bibliography{Boolean}
%\normalsize

\end{document}